\begin{document}

\twocolumn[
\icmltitle{A Continuous Relaxation for Discrete Bayesian Optimization
}
\begin{icmlauthorlist}
\icmlauthor{Richard Michael}{DIKU}
\icmlauthor{Simon Bartels}{TLS}
\icmlauthor{Miguel González-Duque}{DIKU,MLLS}
\icmlauthor{Yevgen Zainchkovskyy}{DTU}
\icmlauthor{Jes Frellsen}{DTU,MLLS}
\icmlauthor{Søren Hauberg}{DTU,MLLS}
\icmlauthor{Wouter Boomsma}{DIKU,MLLS}
\end{icmlauthorlist}
\icmlaffiliation{DIKU}{DIKU, University of Copenhagen}
\icmlaffiliation{DTU}{DTU}
\icmlaffiliation{TLS}{IMT, Université de Toulouse III}
\icmlaffiliation{MLLS}{Center for Machine Learning in Life Science}

\icmlcorrespondingauthor{}{\{richard.michael ; wb\}@di.ku.dk}

\icmlkeywords{Bayesian optimization, Gaussian processes, discrete optimization}

\vskip 0.3in
]
\printAffiliationsAndNotice{}

\begin{abstract}
To optimize efficiently over discrete data and with only few available target observations is a challenge in \textit{Bayesian optimization}.
We propose a continuous relaxation of the objective function and show that inference and optimization can be computationally tractable.
We consider in particular the optimization domain where very few observations and strict budgets exist; motivated by optimizing protein sequences for expensive to evaluate bio-chemical properties.
The advantages of our approach are two-fold: the problem is treated in the continuous setting, and available prior knowledge over sequences can be incorporated directly.
More specifically, we utilize available and learned distributions over the problem domain for a weighting of the Hellinger distance which yields a covariance function.
We show that the resulting acquisition function can be optimized with both continuous or discrete optimization algorithms and empirically assess our method on two bio-chemical sequence optimization tasks.
\end{abstract}

\section{Introduction}
Optimizing discrete sequences with respect to multi-dimensional properties is particularly challenging.
If no gradient information is available and evaluation is expensive, one can use \textit{Bayesian optimization} (\bo{}) to solve the problem \citep{mockus_bayesian_1975, shahriari_taking_2016}.
The problem becomes even more challenging if our inputs are sequences of discrete elements, only few observations are given, and a strict limit is imposed on the number of possible evaluations.
Yet, most Bayesian optimization algorithms assume the optimization problem to be continuous.
Furthermore for contemporary \bo{} approaches the number of starting observations and available budget are often only tested in scenarios where these values are quite high \cite{swersky_multi-task_2013, swersky_amortized_2020}.
In this work we address the challenge of only a hand-full of available observations at the start and a strict budget \ie{}only a few hundred observations are feasible -- an \emph{ice-cold} start.
We show how to transform the optimization of discrete sequential inputs such that it is in the continuous domain, and how to utilize available probabilistic models over sequences.
Specifically, we relax the problem by mapping sequences to distributions and optimize in distribution space, which lets us incorporate prior information directly.
\begin{figure}[ht]
    \centering
\includegraphics[width=\columnwidth]{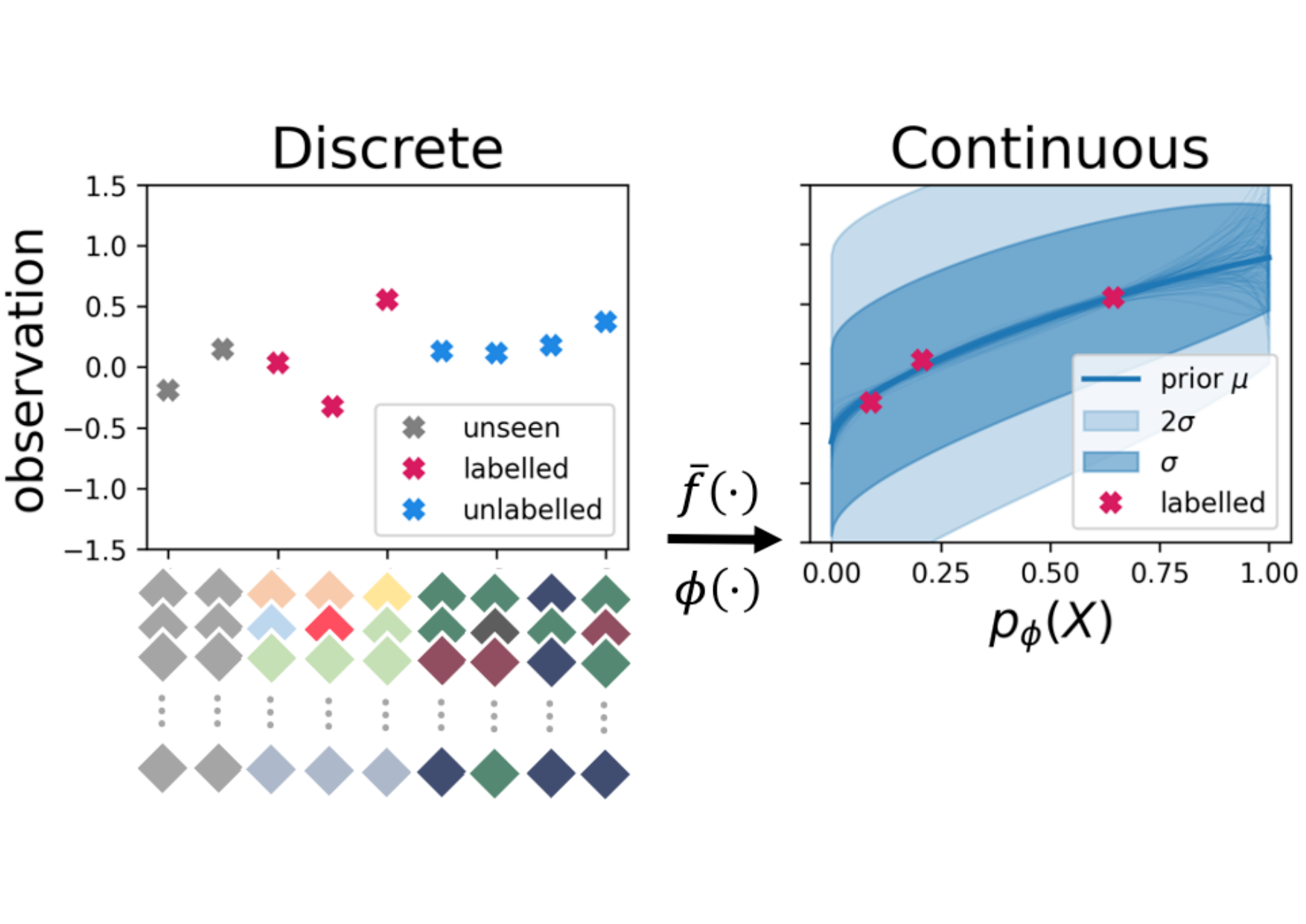}
    \caption{The proposed problem transformation: given a set of discrete element sequences ($X$ (blocks) elements in sequences) with discontinuous observations (left) we continuously relax the objective $\bar{f}$ (right), and assign a $\mathcal{GP}$ prior to it.
    The probability space over the elements is given by $\phi$, a pretrained \emph{a priori} model which parameterizes the distributions over the sequences and elements therein (middle, bottom).
    }
    \label{fig:overview}
\end{figure}

The setup of restricted discrete optimization problems is highly relevant for domains like protein engineering and drug discovery, 
where sequences of tokens are optimized, \ie{}a string of amino acids, or small molecule tokens \cite{biswas_low-n_2021, gao_sample_2022}.
Proteins are the basic building blocks of life, making their form and function the subject of scientific discovery and engineering.
To evaluate protein candidates often requires laboratory experiments making \emph{target function} values expensive to obtain.
Since proteins can be described as a sequence of amino acids from an alphabet of 20 naturally occurring ones, the search space is discrete and even for small proteins with 100 amino acids, the space of $20^{100}$ possible elements is infeasibly large \citep{maynard_smith_natural_1970}.
At the same time, only a very small subset of these sequences is likely to be useful \citep{tian_how_2017}.
These observations render both brute-force searches and direct optimization strategies for discovering useful proteins intractable and undesirable.
Previous work in optimizing bio-chemical sequences uses vast databases of known sequences to learn a low-dimensional continuous representation and perform the search for new and useful candidates therein \citep{frazer_disease_2021, notin_trancepteve_2022, stanton_accelerating_2022}.
It has been observed that in some cases, the Euclidean distance measure in learned latent representations can be a poor proxy for protein similarity \citep{detlefsen_learning_2022}, which indirectly influences Bayesian optimization techniques negatively.
This motivates us to develop an algorithm for optimizing discrete sequences directly and validate our method in the domain of protein engineering.

\textbf{Our contributions} are threefold, we \emph{first} propose a \textbf{continuous relaxation of the objective function} in the \bo{} algorithm, and \emph{secondly} show that inference and optimization remain computationally tractable. 
Specifically by proposing a covariance function that is linear in runtime with respect to input lengths.
\emph{Lastly}, we demonstrate the applicability in the domain of \emph{computational biology} by empirically assessing two discrete sequence optimization tasks.

\section{Background}

\subsection{Problem statement}
\label{sec:problem}
Given are discrete sequences of length $L\in\mathbb{N}$ and an alphabet of $A$ tokens such that, at each position of each sequence, $|A|$ elements are available.
This defines our discrete input set ${\mathbb{X}\ce \cup_{l=1}^L A^l}$.
We are given a \emph{costly} black-box function $f:\mathbb{X}\rightarrow\Re$, for which we wish to find the minimizer ${\vec x_*\ce \argmin_{\vec x\in \mathbb{X}} f(\vec x)}$ using as few function evaluations as possible.\\
For the specific case of proteins, $A$ is the set of naturally occurring 20 amino acids, then $\mathbb{X}$ contains all possible protein sequences up to a particular length $L$. 
The function $f$ may then be some measurable property of a protein system (metabolic, regulatory, etc.),
which is costly to evaluate as it requires us to run a set of wetlab experiments.

\subsection{Bayesian optimization}
Bayesian optimization typically consist of a surrogate model $m$ for $f$, and an acquisition function $\alpha$ \citep{garnett_bayesian_2022}.
The model is updated at each iteration given the observations of all experiments and the acquisition function is numerically optimized on the surrogate to select the next configuration for the evaluation of $f$.  
Typically, $m$ is a Gaussian process \citep{rasmussen_gaussian_2006}, and popular choices for $\alpha$ are \emph{Expected Improvement} \citep{jones_efficient_1998} and the \emph{Upper Confidence Bound} \citep{srinivas_gaussian_2012}. 

Since the input space is discrete, continuous numerical optimization algorithms cannot directly be used to find the optima of $\alpha$.
One possible approach to this issue is to fit a latent variable model and to perform the optimization in latent space \citep{lu_structured_2018,gomez-bombarelli_automatic_2018}.
Since it is unclear whether the Euclidean distance in representation space is a reliable proxy for similarity we choose a different approach, in the form of a continuous relaxation, a constraint probability space, and distance measure of probability vectors -- as described in section \ref{sec:c_relaxation}.

\subsection{Gaussian process regression}
Gaussian processes (GPs) are a typical choice for $m$ due to their expressiveness and closed-form inference.
Formally, a GP is a collection of random variables, such that every finite subset follows a multivariate normal distribution \citep[p.~13]{rasmussen_gaussian_2006}. 
A GP prior is described by a mean function $\mu$ (often set to $\mu(\vec x)\ce 0$), and a positive definite covariance function (kernel) $k:\mathbb{X}\times \mathbb{X}\rightarrow \Re$. 
Assuming that observations of $f$ are distorted by Gaussian noise, the posterior over the function $f$ conditioned on these observations is again a Gaussian process.

\subsection{Related work}
\citet{wan_think_2021} motivate high dimensional discrete BO as a challenge and consider a categorical and mixed search space for optimization (\emph{CASMOPOLITAN}).
\citet{stanton_accelerating_2022} proposed \emph{LaMBO}, a Bayesian optimization routine leveraging a learned lower-dimensional latent space and autoregressive model to optimize discrete sequences (small molecules and protein sequences). 
A subsequent extension of the algorithm (\emph{LaMBO}-2) still optimizes discrete space with ensemble methods replacing GPs (\citet{gruver_protein_2023}).
Further approaches conduct Bayesian optimization on structured inputs using string-based kernels \citep{moss_boss_2020} or leverage structure of the latent space i.e. for molecules \citep{gomez-bombarelli_automatic_2018, tripp_sample-efficient_2020}.

To the best of our knowledge, no current approach formulates \bo{} continuously with a \emph{covariance function on distributions} over discrete sequence inputs, defining the Gaussian Process prior over the \emph{objective} that extends to a \emph{continuous treatment of the acquisition function}.
Furthermore, obtaining a kernel by applying a weighting from  distributions has not been done prior to this work (see section \ref{sec:kernel}). 
\\
On a first glimpse, most closely related to our work may be the articles by \citet{garrido-merchan_dealing_2020} or \citet{daulton_bayesian_2022}. The former proposes a continuous relaxation for categorical inputs on discrete and mixed spaces, whereas we relax the objective.
The latter approach introduces a continuous relaxation in its \emph{probabilistic reparameterization} (PR) of the acquisition function. 
This is different from our relaxation of the objective and does not consider the constrained probability space, which we introduce in Eq. (1) and (2) respectively and which is required for computational feasability.\\

\paragraph{Issues with Gaussian process models over latent space representations}
\citet{lu_structured_2018} investigated Bayesian optimization defining a Gaussian process model directly on the latent space of a variational autoencoder (VAE).
Regular choices of covariance functions, \textit{i.e.}~the Matérn or Squared exponential kernel \citep{rasmussen_gaussian_2006} rely on a distance measure to assess covariance of two observations.
However, this property is not always satisfied in practice, since a learned latent space does not necessarily have a Euclidean measure \citep{arvanitidis_latent_2021}.\\
A key assumption for kernels based \textit{just} on Euclidean distance measures is that far-apart observations are independent from one another given a particular length-scale. 
This assumption is not necessarily fulfilled in learned latent representations, here two sequences can be highly related, while being far away in latent space or close in latent space and not at all related \citep{detlefsen_learning_2022}.

\paragraph{Issues with Bayesian optimization's budget}
Some contemporary Bayesian optimization routines rely on a significant number of black-box function evaluations \citep{maus_discovering_2023} either at the start or through a large budget of possible evaluations.
These approaches are thus limited in prohibitively expensive setting where we can only make a few hundred observations, such as bio-chemical assay experiments \cite{gao_sample_2022}.
Given a strict set-up, only few observations will be available at the start and the budget of evaluating the function limited to few expensive observations.

\section{Continuously Relaxed Bayesian Optimization}
\label{sec:idea}
For our main contribution we propose a continuous relaxation of the objective function which at a first glance appears to make the optimization problem more complicated. 
We show how the problem can be restricted and which crucial ingredients exist to solve the computational challenges induced by the relaxation.
Following the proposed steps we can optimize in our problem-space using {\it a~priori} available probability densities for our surrogate function.
This requires us to consider a covariance function that acts on probability distributions, which we develop below.

\subsection{From discrete to continuous space}
\label{sec:c_relaxation}
We can turn this discrete optimization problem (\cref{sec:problem}) into a continuous one by optimizing in the space of probability distributions over $\mathbb{X}$, minimizing the expected function value of $f$.
This way we obtain a \textbf{differentiable function} $\bar{f}$ \textbf{with the same optima} (see proof in Supplementary \cref{sec:same_optima}):
\begin{align}
    \bar{f}(\mathbf{p}):=\mathbb{E}_{\mathbf{p}}f(x)=\sum_{x\in\mathbb{X}}f(x)\mathbf{p}_x.
\end{align}
where ${\vec p \in \mathbb{P}\ce \{\vec{p}\in[0,1]^{|\mathbb{X}|}\,|\,\sum_{i}\vec p_i=1\}}$ are probability distributions over $\mathbb{X}$, which are real vectors of length $|\mathbb{X}|$ with elements between $0$ and $1$ whose components sum to $1$.
Note that, each element of $\vec x\in \mathbb{X}$ can be represented as ${\vec p_{\vec x}[\vec x']\ce \mathbbm{1}_{\vec x'=\vec x}(\vec x')}$ such that ${f(\vec x)=\bar{f}(\vec p_{\vec x})}$.

Having a continuous objective function with preserved optima, may naively appear sufficient for the successful application of \bo{}.
However, our approach is accompanied by a couple of computational challenges, which we will point out in the following.

\paragraph{Representation} 
\label{sec:representation_constraint}
Even for small proteins sequences of length less than 100, any $\vec p\in\mathbb{P}$ has more entries than there are atoms in the observable universe \citep{maynard_smith_natural_1970}. 
The space $\mathbb{P}$ will be too large to work with and we will have to restrict it.
\textbf{In the following, we will consider $\factorP$, the space of factorizing distributions of length $L$.}
\begin{align}
    \mathbb{P}_f:=\left\{\mathbf{p}\in[0,1]^{L\times A} | \mathbf{p}\geq0,\forall l: \sum_{a=1}^{A}\mathbf{p}_{l,a}=1\right\} 
\end{align}

\paragraph{Inference} Having a Gaussian process model over $f$ naturally induces a model over $\bar{f}$, by using the kernel function $k'(\vec p, \vec q)\ce \sum_{\vec x,\vec x'\in \mathbb{X}} \vec p[\vec x]k(\vec x, \vec x')\vec q[\vec x']$.
The evaluation of this canonical kernel function is intractable though, even if we consider a restriction of $\mathbb{P}$.
Approaches for this issue are presented in \cref{sec:model_bar_f}.

\paragraph{Optimization} Having established a model $m$ over $\bar{f}$ we must also address the problem of how to optimize a Bayesian optimization's acquisition function $\alpha_{m}$. 
Even though this acquisition function is continuous, the fact that proposed inputs must remain probability distributions prevents us from freely using any optimizer\footnote{The vector components have to be positive and sum to one.}.
This topic is discussed in \cref{sec:optimization}.

\subsection{The model}
\label{sec:model_bar_f}
A Gaussian process prior over $f$ induces a Gaussian process belief over $\bar{f}$, yet the computation of the posterior over $\bar{f}$ is intractable -- even if we restrict ourselves to $\factorP$.
The \textbf{key idea is to place a Gaussian process prior directly over $\bar{f}$} instead of using the induced prior from $f$.
Doing so will allow us to do computationally tractable inference.
We note that in general it is \emph{not} the case that $\bar{f}(\vec p)=\sum_{x}\vec p(x)\bar{f}(\mathbbm{1}_x)$, however we can address this in practice. 
The main challenge is to find a kernel function which can exploit the structural properties of $\mathbb{P}_f$; for example distances informed by {\it prior} probability densities, and subsequently discard regions of low-probability.

\subsubsection{The weighted Hellinger kernel}
\label{sec:kernel}
One such kernel ${k(\cdot , \cdot): \mathbb{P}_f\times\mathbb{P}_f\rightarrow \mathbb{R}}$ is based 
on the \emph{Hellinger} distance $r$ \citep{hellinger_neue_1909}: 
\begin{align}
    r(\mathbf{p},\mathbf{q})&:=\sqrt{\frac{1}{2}\sum_{x\in\mathbb{X}}\left(\sqrt{\mathbf{p}(x)}-\sqrt{\mathbf{q}(x)}\right)^2}, \label{eq:hellinger}\\
    k(\mathbf{p},\mathbf{q})&:=\theta\exp(-\lambda r(\mathbf{p},\mathbf{q})).
\end{align}

We know $r$ to be negative definite \cite{harandi_beyond_2015}, and therefore $k$ is a \emph{positive definite} kernel for all $ \sigma,\lambda > 0 $ \citep{feragen_geodesic_2015}.
Since we restrict $\vec p, \vec q$ to be elements of $\factorP$, we can evaluate in $k(\vec p, \vec q)$ in $\mathcal{O}(L\times A)$ time by rewriting \cref{eq:hellinger} as:
\begin{align*}
    r^2(\mathbf{p},\mathbf{q})&=1- \prod_{l=1}^{L}\sum_{a=1}^{A}\sqrt{\mathbf{p}[l,a]\mathbf{q}[l,a]}.
\end{align*}
For the proof see \cref{thm:efficient_hd} in the supplementary.

This kernel will not be useful to guide optimization which becomes apparent when considering the Hellinger distance for any distinct sequences ${\vec x,\vec x'}$: ${r(\mathbbm{1}_{\vec x}, \mathbbm{1}_{\vec x '})=1}$. 
However, very often there exists a \emph{prior ranking over elements of $\mathbb{X}$} in form of a probability distribution.
For example, sequence alignment algorithms used to search biological databases often return a hidden Markov model, or a variational autoencoder can be used to compute likelihoods \citep{durbin_biological_1998, riesselman_deep_2018, frazer_disease_2021}.
Such models are trained in an unsupervised way and can correlate sufficiently with $f$ \citep{frazer_disease_2021}. 
To make use of this prior knowledge, we propose to weigh the Hellinger distance using the given ranking.
For every positive weighting $w:\mathbb{X}\rightarrow\mathbb{R}_+$, we define the weighted Hellinger distance as 
$$r_w^2(\vec p, \vec q)\ce \frac{1}{2}\sum_{\vec x\in\mathbb{X}} w(\vec x)\left(\sqrt{p(\vec x)}-\sqrt{q(\vec x)}\right)^2\space.$$
The proof that $r_w$ still gives rise to a kernel function can be found in the supplementary material (\cref{thm:weighted_hd}).
When considering distinct sequences $\vec x, \vec x'$, the weighted Hellinger distance evaluates to ${r_w(\mathbbm{1}_{\vec x}, \mathbbm{1}_{\vec x'})=\frac{1}{2}w(\vec x)+w(\vec x')}$.
This means that sequences which have low weighting are considered similar whereas sequences with high weighting are considered more independent.

The kernel is particularly suitable to model functions that have a cut-off or where the majority of inputs yields a particular value.
If the weighting is 0 for sequences that have the same function-value, they correlate perfectly under this kernel, allowing us to disregard this vast space with one function evaluation in our \bo{} routine.

When choosing $w$ it is important to allow for efficient evaluation\footnote{Note, $w$ can be of different form than elements of $\factorP$.} of $r_w$.
In the following we will consider two options: i) hidden Markov models \citep{durbin_biological_1998} and ii) variational autoencoders (VAEs) \citep{kingma_introduction_2019}.
\cref{fig:hellinger_contour_zero} shows a visualization of how the combination of Hellinger kernel and decoder induces a more complex, non-Euclidean similarity measure on the latent space.

\begin{figure*}[!h]
\centering
\includegraphics[width=0.8\textwidth]{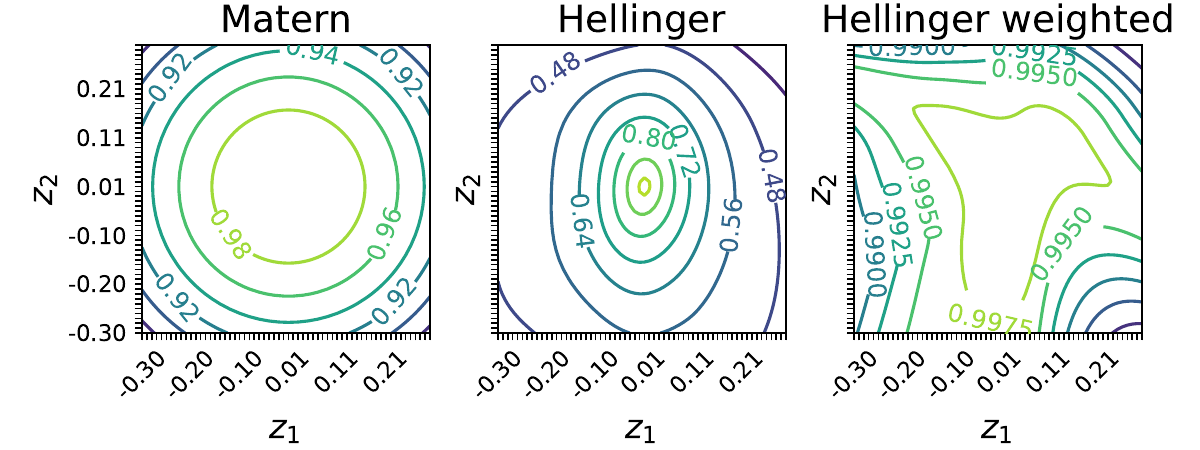}
\caption{Visual comparison of the (weighted) Hellinger distance kernel and a Mat\'ern 5/2.
We are using a two-dimensional version of the decoder proposed by \citet{brookes_conditioning_2019}.
For the Mat\'ern kernel we are visualizing $k(0, [z_1, z_2])$ whereas for the Hellinger kernel we show $k(P(x\g 0), P(x'\g [z_1,z_2]))$.
With the Hellinger kernel, the decoder induces a much more complex, non-Euclidean similarity measure on the latent space. 
Note that, the Hellinger kernels are non-stationary in the latent space---\cref{fig:hellinger_contour_gfp}
in the supplementary displays the same visualization for a different reference point.
}
\label{fig:hellinger_contour_zero}
\end{figure*}

\subsubsection{Modeling details}
This seemingly simplistic kernel gives rise to expressive Gaussian process models by recalling that products of kernels define a valid kernel \citep[p.~95]{rasmussen_gaussian_2006}.
More specifically, for a given latent variable model we propose to use $k'(x_A,x_B)\ce \prod_{n=1}^N k_{P(X|Z_n)}(x_A, x_B)$
with $Z_n$ being subsets of the latent space and the same hyper-parameters $\theta$ and $\lambda$ for each kernel. 
This is a product kernel over the available samples.
We defer specifying our choices for $Z_n$ to \cref{sec:experiments}.

Regarding the hyper-parameters, we 
follow \citet{jones_efficient_1998} and set a constant prior mean function with ${\mu\ce \frac{\vec 1\Trans \inv{\mat K}\vec y}{\vec y\Trans\inv{\mat K}\vec y}}$ and the amplitude of the kernel to ${\theta\ce \frac{1}{N-1}(\vec y - \mu)\Trans\inv{\mat K}(\vec y - \mu)}$.
The other parameters, which are $\lambda$ and the observation noise $\sigma^2$, we set by maximizing the evidence {$\log p(\vec y|\lambda,\sigma^2)$}.

\subsection{Optimizing the acquisition function}
\label{sec:optimization}
As a consequence of the continuous relaxation, the acquisition function also acts on the probability distributions.
Essentially, there are three ways to find local optima:
(i) discrete optimization algorithms, (ii) using a continuous parameterization of $\factorP$, (iii) manifold optimization on $\factorP$, and (iv) any combination of the former options.

Note that our acquisition function still allows us to rate sequences individually by treating them as indicator functions.
Therefore, any discrete optimization algorithm applies, which we use later to compare with existing discrete optimizers.

Another option for optimization arises if we have a continuous parameterization for $\factorP$, for example given through the decoder of a VAE $\operatorname{Dec}: \Re^d \rightarrow \factorP$.
In that case, we can apply standard continuous optimization algorithms to ${\beta: \Re^d\rightarrow \Re, \vec z \mapsto \alpha(\operatorname{Dec}[\vec z])}$.

In the ideal case, the optimization stops with an atomic distribution\footnote{A deterministic distribution in which one outcome has a likelihood of one and all others zero.} such that the decision which sequence to evaluate is unambiguous.
Even if this is not the case, the optimization helps to narrow down the choice of candidates.
To decide on a final candidate, we may choose the most likely sequence or we can sample from the optimized distribution and use the same acquisition function to score the sampled candidates.
The latter option makes our approach applicable for a batched optimization setting.

A third option to optimize the acquisition function, we mention here in passing, since its implications are out of the scope of this work.
The space $\factorP$ is the product simplex, a manifold.
Hence, we can apply any manifold optimization algorithm directly on the acquisition function \cite{boumal_optimization_2014}.

\renewcommand{\algorithmicrequire}{\textbf{Input:}}%
\renewcommand{\algorithmicensure}{\textbf{Output:}}%

\subsection{The \CoRel{} algorithms}
\cref{algo:general_bo} shows how \CoRel{} can be used in the general Bayesian optimization loop given a continuous parameterization of $\factorP$, here the decoder of a variational autoencoder.
See \cref{algo:si:continuous_bo} and \cref{algo:si:discrete_bo} in the supplementary material for the conceptual difference when using discrete optimization or direct continuous optimization.

Given is a set of starting sequences with observations, the acquisition function on the $\factorP$ space, and a pretrained model with which we derive a probability vector for each sequence. 
We fit a GP posterior predictive optimizing the kernel parameters. 
The acquisition function is queried for the maximizing point on the parameterized space via the predictive GP to find the maximizing probability vector.
From the probability vector we can obtain a sequence or a set of sequences within a given budget.
The black-box function is evaluated on each sequence and the observations as well as the inputs are added to the dataset. 
These steps are repeated until the budget is exhausted.

\begin{algorithm}
\caption{\CoRel{} using parameterized optimization}\label{alg:cap}
\begin{algorithmic}
\Require acquisition $a:\factorP\rightarrow\Re $, black-box $f:\mathbb{X}\rightarrow \Re $, dataset $\mathcal{D}_1=\{X, y\}$, pretrained LVM $\phi: \Re^D\rightarrow \factorP$
\For{{$t \in 1, ..., t_{\max}$}}
\State $m\gets$ trainModel($(\mathbbm{1}_{\vec x_i}, \vec y_i)_{i=1}^t$) 
\State $\vec z_* \gets \arg\max_{\vec z}a(\phi(\vec z), m)$ 
\State $\vec p_* \gets \phi(\vec z_*)$
\State $\vec x\gets$ getSequenceFromDistribution($\vec p_*$)
\State $\mathcal{D}_{t+1} \gets \mathcal{D}_{t} \cup \{\vec x, f(\vec x)\}$
\EndFor
\end{algorithmic}
\label{algo:general_bo}
\end{algorithm}

\begin{algorithm}
\caption{getSequenceFromDistribution}
\begin{algorithmic}
\Require distribution $P$, acquisition function $a$, budget b
\Ensure $\vec x_*$
\State $\vec x_*=\arg\max_{\vec x}P(\vec x)$  
\State $y_* = a(\mathbbm{1}_{\vec x_*})$
\For{$t \in 1, ..., b$}
\State $\vec x\sim P$
\State $y = a(\mathbbm{1}_{\vec x})$
\If{$y>y_*$}
\State $y_* = y$, $\vec x_*=\vec x$
\EndIf
\EndFor
\end{algorithmic}
\label{algo:distribution_subroutine}
\end{algorithm}
Now that we have a complete formulation of the proposed Bayesian optimization procedure we assess the results empirically.

\begin{figure}[!h]
\includegraphics[width=\columnwidth]{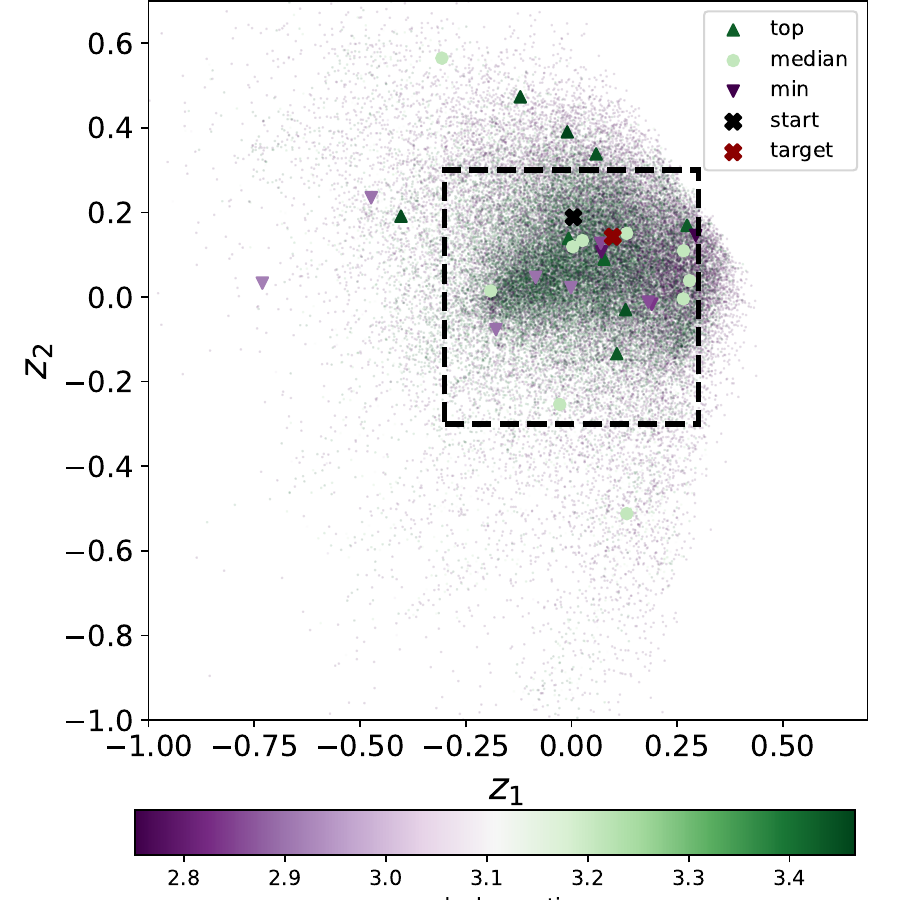}
    \caption{Two dimensional VAE latent space adapted from \cite{brookes_conditioning_2019}.
    We encode the \GFP{} corpus of experimentally evaluated sequences (dots). Available for optimization are only oracle evaluations - see markers for oracle predictions (top \ding{115}, median \ding{108}, and lowest \ding{116} 10 observations each).
    Start is the reference wild-type and target the maximally fluorescent candidate. 
    }
    \label{fig:latent_space}
\end{figure}

\section{Empirical Results}
\label{sec:experiments}
Two problems serve as an empirical benchmark of our method: (i) \citet{stanton_accelerating_2022} optimized several \emph{Red Fluorescent Protein}s (\RFP{}) with \emph{LamBO} and (ii) \citet{brookes_conditioning_2019} defined a model to optimize \emph{enhanced Green Fluorescent Protein} (e\GFP{}) with \emph{CBas}.
We optimize both problems with \CoRel{} in the \emph{ice-cold} setup, where only the candidates are available that strictly define the problem; specifically 6 \RFP{} and 3 \GFP{} sequences.
This means that only the observations for initial candidate sequences are available which are to be optimized and no other measurements can be obtained initially.
A setup particularly relevant for drug-discovery where initial experimental observations can be prohibitively expensive \citep{dubach_quantitating_2017}.

In order to systematically test the protein sequences for their properties under different experimental setups we provide \href{https://machinelearninglifescience.github.io/poli-docs/}{poli} -- a dedicated protein optimization library to query black box functions with discrete sequence inputs.

\subsection{Continuous optimization}
\paragraph{Optimizing with a Latent Variable Model} (LVM)
is arguably the most appealing approach for \CoRel{} as a continuous optimizer.
In this section we investigate the \GFP{} problem for which a continuous parameterization of $\factorP$ already exists as a pretrained latent decoder
\citep{brookes_conditioning_2019} (see Supplementary \cref{sec:cbas_setup} for details).
Figure \ref{fig:latent_space} shows an adaptation of the \GFP{} VAE as a learned two-dimensional embedding, for which only an oracle function can be queried (see markers) and experimental observations are unavailable.
The CBas oracle function evaluations serve as surrogate for the true \GFP{} fluorescence values. 
To qualitatively inspect our method we evaluate the covariance function values of the (2D) latent space in an area around the reference sequence. 
\cref{fig:hellinger_contour_zero} shows the evaluated Hellinger (\cref{eq:hellinger}) and weighted Hellinger kernel, which are \emph{not} equidistant in latent space (like the Matérn) respective a reference point while the decoder indeed informed the covariance function values.
These values change as anticipated when computed with respect to a different reference point (see Supplementary \cref{fig:hellinger_contour_gfp}). 
In particular higher covariance values are assigned in a density around the reference points and the respective decoding probabilities.

When conducting optimization we observe larger objective values within the allotted budget (100 queries) compared to a random acquisition while the model appears to prioritize extreme values in the problem-space (see Supplementary  \cref{fig:si:gfp_obs}).

\subsection{Discrete optimization}
\paragraph{Optimizing in the discrete proposed sequence setting} is what contemporary optimization protocols often translate to if the generated proposals are directly in the sequence space. 
Therefore we show that our approach is still applicable to the discrete setting.
For this we optimize the \RFP{} problem with respect to multiple properties: stability and surface area accesibility (SASA).
We use LamBO as a SoTA reference in this optimization setting and compare also against random sequence mutations.
Again we focus in particular on a strict setup where only the Pareto front is given (6 reference \RFP{} sequences) and the number of oracle evaluations is limited to 180 queries specifically.
To optimize for multiple tasks in the \bo{} algorithm we use
the expected hypervolume improvement (EHVI) as an acquisition function \citep{daulton_differentiable_2020}.

To build the $\mathbb{P}_f$-space we rely \emph{only} on a Hidden Markov Model (HMM), obtained by the standard HMMER algorithm \cite{eddy_accelerated_2011}. 
We use this as $\phi$ model to parameterize our distributions.
The choice for an HMM is natural, as it is built in the process of querying for related sequences to the starting candidates, a step that would be required when setting up the initial \RFP{} problem-set \citep{stanton_accelerating_2022}\footnote{The LamBO \RFP{} data includes a wide range of additional RFP sequences (see Supplementary \cref{sec:lambo_setup}).}. 

\cref{fig:discrete_opt} shows that running \CoRel{} obtains a larger relative hypervolume compared to SoTA method \emph{LamBO}, and a simple random mutation baseline which modifies 2 amino acids selected at random from elements in the Pareto front and retains the best results for subsequent iterations. 
This results in a larger Pareto front of the respective protein candidates (see \cref{fig:discrete_opt_front}).

Our results also show that we achieve on-par performance in the reference setup  where we have just over 500 of initial sequences available (see Supplementary \cref{fig:si:ref_HV}).
Under those conditions we outperform existing methods with respect to the achieved (relative) hyper-volume within the given budget - see \cref{fig:discrete_opt}.
However we note that if multiple starting candidates are available (\ie{}\ $N=50$) a larger Pareto-front is optimized and \emph{LamBO} does outperform our method.

\begin{figure*}[ht]
    \centering
    \begin{minipage}[b]{0.44\linewidth}
    \includegraphics[width=\columnwidth]{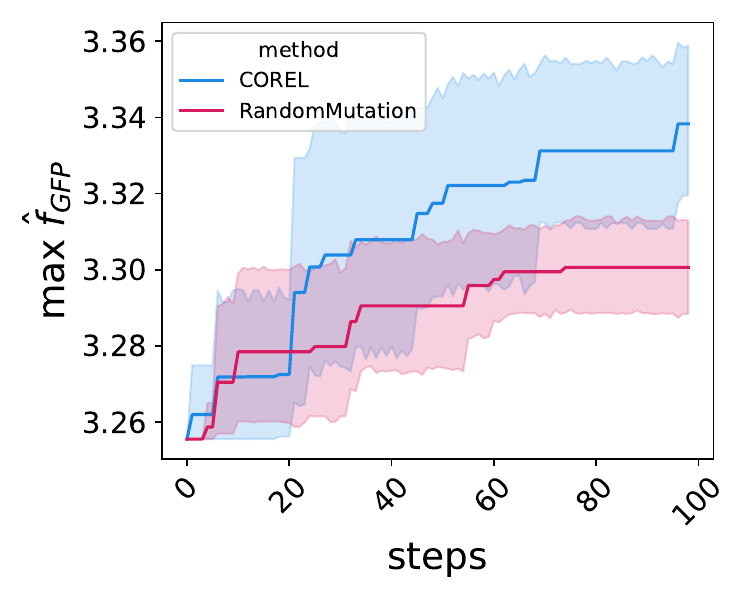}
    \caption{The \GFP{} sequences are optimized by \CoRel{} and random mutations with the CBas oracle ($\hat{f}_{\text{GFP}}$) over 100 steps.
    We observe best sequences selected at an iteration with mean (line) and 95\%CI (shaded) across seven seeds.
    }
    \label{fig:continuous_opt}
    \end{minipage}
    \hfill
    \begin{minipage}[b]{0.49\linewidth}
    \centering
    \includegraphics[width=\columnwidth]{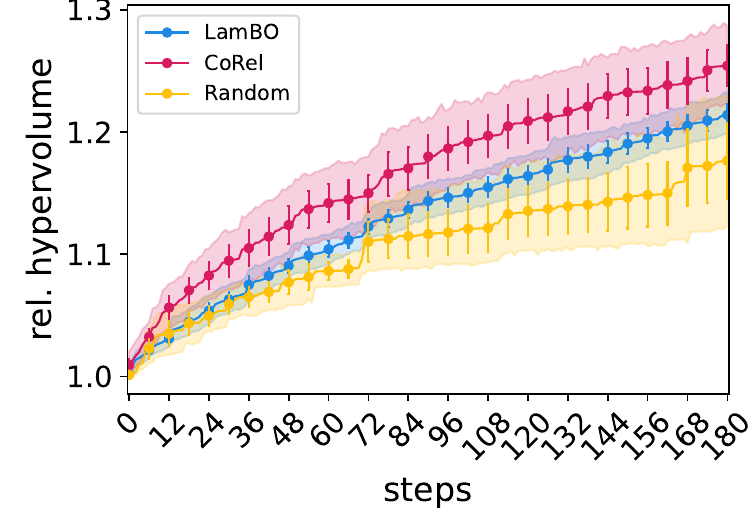}
    \caption{
    The RFP Pareto front is optimized discretely computing relative hypervolume respective the starting sequences, comparing CoRel with LamBO starting with six observations. 
    Markers ({\tiny \ding{108}}) indicate batch average and std.err. bars across 21 seeds (random 5).
    }
    \label{fig:discrete_opt}
    \end{minipage}
\end{figure*}

\begin{figure}[ht]
    \centering
    \includegraphics[width=0.8\columnwidth]{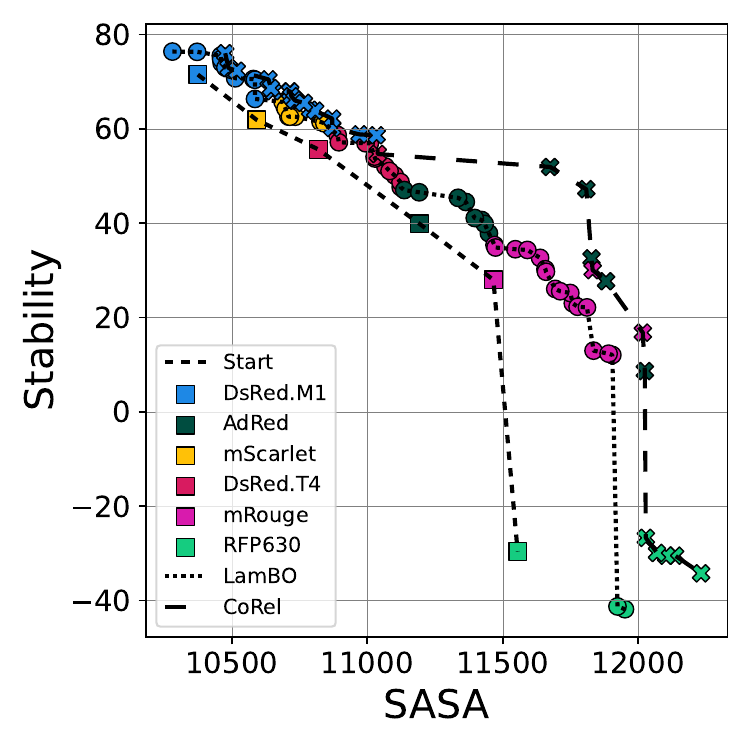}
    \caption{
    Pareto front of the RFP properties stability and surface accessibility (SASA).
    The \emph{start} Pareto sequences (\ding{110} color-coded), the best variants proposed by LamBO (\ding{108}) and CoRel (\ding{54}) after 32 iterations (21 seeds) from a cold start.
    }
    \label{fig:discrete_opt_front}
\end{figure}

\section{Discussion}
\paragraph{We focus on the surrogate model rather than the acquisition function}
Advances in Bayesian optimization can either be done by building a suitable surrogate function to model the unknown true $f$ or by investigating the acquisition function.
Our contribution is apart from the problem transformation primarily the surrogate model via the covariance function.
\CoRel{} focuses on the properties of the surrogate function - in particular, a GP which incorporates a ranking over the continuously relaxed inputs.
This yields a surrogate model predicting points of interest to observe, at the cost of capturing the underlying function landscape.
While the acquisition function is an interesting topic of research in its own right it is out of the scope of this work and we rely on the established results in the field \cite{jones_efficient_1998}.

\paragraph{Relying on established prior models can alleviate model complexity}
Our setup of continuously relaxing and using a pre-computed model such as HMMs parameterizing our constrained probability space is different compared to contemporary latent space optimizations.
Given that distributions or a parameterization of elements of a sequence are available for a particular problem-domain that have been shown to work reliably, they can be used directly in the \CoRel{} approach.
This alleviates to some extent to account for a plethora of potential decisions which are required for latent-space BO approaches: which encodings to use, which surrogate model to train given the trained latent representation, the sampling and reweighing of base and proposal sequences, etc. . 

\paragraph{Simple prior models may impact performance}
The results show that we perform SoTA in particular in the cold-start setting, restricted domain - relevant for \bo{} on bio-chemical sequences.
The performance in less restrictive conditions can be improved by tuning the models which parameterize the densities.
The contributions of this paper are mainly the problem transformation and its application to sequence optimization.
Improving upon the pretrained models for which the weighted Hellinger kernel can be applied will be left for future work.

\section{Conclusion}
We have shown Bayesian optimization by continuous relaxation with the proposed \CoRel{} approach, namely a continuous relaxation of the objective and a Gaussian process acting on sequences as probability distributions.
We are able to do optimization with the introduced weighted Hellinger kernel after having shown that it is indeed a kernel.
We have demonstrated the applicability in the bio-chemical domain and compared empirically with contemporary methods by optimizing various protein sequences.
Here we have demonstrated feasibility and performance particularly in significantly more challenging formulations of the optimization problems.

\section{Software and Data}
We provide the code for \CoRel{} on \href{https://github.com/MachineLearningLifeScience/corel}{GitHub} as well as the \href{https://github.com/MachineLearningLifeScience/poli}{\emph{poli} package on}, both under the MIT license.

\section*{Acknowledgements}
This work was in part funded by the Danish Data Science Academy, which is funded by the Novo Nordisk Foundation (NNF21SA0069429) and VILLUM FONDEN (40516).
Further funding includes the MLLS Center (Basic Machine Learning Research in Life Science NNF20OC0062606), the Danish Ministry of Education and Science, Digital Pilot Hub, Skylab Digital, and the Pioneer Centre for AI (DRNF grant number P1).

\bibliography{references}

\begin{thebibliography}{41}
\providecommand{\natexlab}[1]{#1}
\providecommand{\url}[1]{\texttt{#1}}
\expandafter\ifx\csname urlstyle\endcsname\relax
  \providecommand{\doi}[1]{doi: #1}\else
  \providecommand{\doi}{doi: \begingroup \urlstyle{rm}\Url}\fi

\bibitem[Arvanitidis et~al.(2021)Arvanitidis, Hansen, and
  Hauberg]{arvanitidis_latent_2021}
Arvanitidis, G., Hansen, L.~K., and Hauberg, S.
\newblock Latent {Space} {Oddity}: on the {Curvature} of {Deep} {Generative}
  {Models}, December 2021.
\newblock URL \url{http://arxiv.org/abs/1710.11379}.
\newblock arXiv:1710.11379 [stat].

\bibitem[Biswas et~al.(2021)Biswas, Khimulya, Alley, Esvelt, and
  Church]{biswas_low-n_2021}
Biswas, S., Khimulya, G., Alley, E.~C., Esvelt, K.~M., and Church, G.~M.
\newblock Low-{N} protein engineering with data-efficient deep learning.
\newblock \emph{Nature Methods}, 18\penalty0 (4):\penalty0 389--396, April
  2021.
\newblock ISSN 1548-7105.
\newblock \doi{10.1038/s41592-021-01100-y}.
\newblock URL \url{https://www.nature.com/articles/s41592-021-01100-y}.
\newblock Number: 4 Publisher: Nature Publishing Group.

\bibitem[Boumal(2014)]{boumal_optimization_2014}
Boumal, N.
\newblock Optimization and estimation on manifolds.
\newblock 2014.
\newblock Publisher: Catholic University of Louvain, Louvain-la-Neuve, Belgium.

\bibitem[Brookes et~al.(2019)Brookes, Park, and
  Listgarten]{brookes_conditioning_2019}
Brookes, D., Park, H., and Listgarten, J.
\newblock Conditioning by adaptive sampling for robust design.
\newblock In \emph{Proceedings of the 36th {International} {Conference} on
  {Machine} {Learning}}, pp.\  773--782. PMLR, May 2019.
\newblock URL \url{https://proceedings.mlr.press/v97/brookes19a.html}.
\newblock ISSN: 2640-3498.

\bibitem[Daulton et~al.(2020)Daulton, Balandat, and
  Bakshy]{daulton_differentiable_2020}
Daulton, S., Balandat, M., and Bakshy, E.
\newblock Differentiable {Expected} {Hypervolume} {Improvement} for {Parallel}
  {Multi}-{Objective} {Bayesian} {Optimization}.
\newblock In \emph{Advances in {Neural} {Information} {Processing} {Systems}},
  volume~33, pp.\  9851--9864. Curran Associates, Inc., 2020.

\bibitem[Daulton et~al.(2022)Daulton, Wan, Eriksson, Balandat, Osborne, and
  Bakshy]{daulton_bayesian_2022}
Daulton, S., Wan, X., Eriksson, D., Balandat, M., Osborne, M.~A., and Bakshy,
  E.
\newblock Bayesian {Optimization} over {Discrete} and {Mixed} {Spaces} via
  {Probabilistic} {Reparameterization}, October 2022.
\newblock URL \url{http://arxiv.org/abs/2210.10199}.
\newblock arXiv:2210.10199 [cs, math, stat].

\bibitem[Detlefsen et~al.(2022)Detlefsen, Hauberg, and
  Boomsma]{detlefsen_learning_2022}
Detlefsen, N.~S., Hauberg, S., and Boomsma, W.
\newblock Learning meaningful representations of protein sequences.
\newblock \emph{Nature Communications}, 13\penalty0 (1):\penalty0 1914, April
  2022.
\newblock ISSN 2041-1723.
\newblock \doi{10.1038/s41467-022-29443-w}.
\newblock URL \url{https://www.nature.com/articles/s41467-022-29443-w}.
\newblock Number: 1 Publisher: Nature Publishing Group.

\bibitem[Dubach et~al.(2017)Dubach, Kim, Yang, Cuccarese, Giedt, Meimetis,
  Vinegoni, and Weissleder]{dubach_quantitating_2017}
Dubach, J.~M., Kim, E., Yang, K., Cuccarese, M., Giedt, R.~J., Meimetis, L.~G.,
  Vinegoni, C., and Weissleder, R.
\newblock Quantitating drug-target engagement in single cells in vitro and in
  vivo.
\newblock \emph{Nature chemical biology}, 13\penalty0 (2):\penalty0 168--173,
  2017.
\newblock Publisher: Nature Publishing Group US New York.

\bibitem[Durbin et~al.(1998)Durbin, Eddy, Krogh, and
  Mitchison]{durbin_biological_1998}
Durbin, R., Eddy, S.~R., Krogh, A., and Mitchison, G.
\newblock \emph{Biological {Sequence} {Analysis}: {Probabilistic} {Models} of
  {Proteins} and {Nucleic} {Acids}}.
\newblock Cambridge University Press, 1 edition, April 1998.
\newblock ISBN 978-0-521-62041-3 978-0-521-62971-3 978-0-511-79049-2.
\newblock \doi{10.1017/CBO9780511790492}.
\newblock URL
  \url{https://www.cambridge.org/core/product/identifier/9780511790492/type/book}.

\bibitem[Eddy(2011)]{eddy_accelerated_2011}
Eddy, S.~R.
\newblock Accelerated {Profile} {HMM} {Searches}.
\newblock \emph{PLOS Computational Biology}, 7\penalty0 (10):\penalty0
  e1002195, October 2011.
\newblock ISSN 1553-7358.
\newblock \doi{10.1371/journal.pcbi.1002195}.
\newblock URL
  \url{https://journals.plos.org/ploscompbiol/article?id=10.1371/journal.pcbi.1002195}.
\newblock Publisher: Public Library of Science.

\bibitem[Feragen et~al.(2015)Feragen, Lauze, and
  Hauberg]{feragen_geodesic_2015}
Feragen, A., Lauze, F., and Hauberg, S.
\newblock Geodesic {Exponential} {Kernels}: {When} {Curvature} and {Linearity}
  {Conflict}.
\newblock pp.\  3032--3042, 2015.
\newblock URL
  \url{https://www.cv-foundation.org/openaccess/content_cvpr_2015/html/Feragen_Geodesic_Exponential_Kernels_2015_CVPR_paper.html}.

\bibitem[Frazer et~al.(2021)Frazer, Notin, Dias, Gomez, Min, Brock, Gal, and
  Marks]{frazer_disease_2021}
Frazer, J., Notin, P., Dias, M., Gomez, A., Min, J.~K., Brock, K., Gal, Y., and
  Marks, D.~S.
\newblock Disease variant prediction with deep generative models of
  evolutionary data.
\newblock \emph{Nature}, 599\penalty0 (7883):\penalty0 91--95, November 2021.
\newblock ISSN 1476-4687.
\newblock \doi{10.1038/s41586-021-04043-8}.
\newblock URL \url{https://www.nature.com/articles/s41586-021-04043-8}.
\newblock Number: 7883 Publisher: Nature Publishing Group.

\bibitem[Gao et~al.(2022)Gao, Fu, Sun, and Coley]{gao_sample_2022}
Gao, W., Fu, T., Sun, J., and Coley, C.~W.
\newblock Sample {Efficiency} {Matters}: {A} {Benchmark} for {Practical}
  {Molecular} {Optimization}, October 2022.
\newblock URL \url{http://arxiv.org/abs/2206.12411}.
\newblock arXiv:2206.12411 [cs, q-bio].

\bibitem[Garnett(2022)]{garnett_bayesian_2022}
Garnett, R.
\newblock \emph{Bayesian {Optimization}}.
\newblock Cambridge University Press, January 2022.

\bibitem[Garrido-Merchán \& Hernández-Lobato(2020)Garrido-Merchán and
  Hernández-Lobato]{garrido-merchan_dealing_2020}
Garrido-Merchán, E.~C. and Hernández-Lobato, D.
\newblock Dealing with categorical and integer-valued variables in {Bayesian}
  {Optimization} with {Gaussian} processes.
\newblock \emph{Neurocomputing}, 380:\penalty0 20--35, March 2020.
\newblock ISSN 0925-2312.
\newblock \doi{10.1016/j.neucom.2019.11.004}.
\newblock URL
  \url{https://www.sciencedirect.com/science/article/pii/S0925231219315619}.

\bibitem[Gruver et~al.(2023)Gruver, Stanton, Frey, Rudner, Hotzel,
  Lafrance-Vanasse, Rajpal, Cho, and Wilson]{gruver_protein_2023}
Gruver, N., Stanton, S., Frey, N.~C., Rudner, T. G.~J., Hotzel, I.,
  Lafrance-Vanasse, J., Rajpal, A., Cho, K., and Wilson, A.~G.
\newblock Protein {Design} with {Guided} {Discrete} {Diffusion}, May 2023.
\newblock URL \url{http://arxiv.org/abs/2305.20009}.
\newblock arXiv:2305.20009 [cs, q-bio].

\bibitem[Gómez-Bombarelli et~al.(2018)Gómez-Bombarelli, Wei, Duvenaud,
  Hernández-Lobato, Sánchez-Lengeling, Sheberla, Aguilera-Iparraguirre,
  Hirzel, Adams, and Aspuru-Guzik]{gomez-bombarelli_automatic_2018}
Gómez-Bombarelli, R., Wei, J.~N., Duvenaud, D., Hernández-Lobato, J.~M.,
  Sánchez-Lengeling, B., Sheberla, D., Aguilera-Iparraguirre, J., Hirzel,
  T.~D., Adams, R.~P., and Aspuru-Guzik, A.
\newblock Automatic {Chemical} {Design} {Using} a {Data}-{Driven} {Continuous}
  {Representation} of {Molecules}.
\newblock \emph{ACS Central Science}, 4\penalty0 (2):\penalty0 268--276,
  February 2018.
\newblock ISSN 2374-7943.
\newblock \doi{10.1021/acscentsci.7b00572}.
\newblock URL \url{https://doi.org/10.1021/acscentsci.7b00572}.
\newblock Publisher: American Chemical Society.

\bibitem[Harandi et~al.(2015)Harandi, Salzmann, and
  Baktashmotlagh]{harandi_beyond_2015}
Harandi, M., Salzmann, M., and Baktashmotlagh, M.
\newblock Beyond {Gauss}: {Image}-{Set} {Matching} on the {Riemannian}
  {Manifold} of {PDFs}.
\newblock In \emph{Proceedings of the {IEEE} {International} {Conference} on
  {Computer} {Vision} ({ICCV})}, December 2015.

\bibitem[Hellinger(1909)]{hellinger_neue_1909}
Hellinger, E.
\newblock Neue {Begründung} der {Theorie} quadratischer {Formen} von
  unendlichvielen {Veränderlichen}.
\newblock \emph{Journal für die reine und angewandte Mathematik},
  1909\penalty0 (136):\penalty0 210--271, July 1909.
\newblock ISSN 1435-5345.
\newblock \doi{10.1515/crll.1909.136.210}.
\newblock URL
  \url{https://www.degruyter.com/document/doi/10.1515/crll.1909.136.210/html}.
\newblock Publisher: De Gruyter.

\bibitem[Jones et~al.(1998)Jones, Schonlau, and Welch]{jones_efficient_1998}
Jones, D.~R., Schonlau, M., and Welch, W.~J.
\newblock Efficient {Global} {Optimization} of {Expensive} {Black}-{Box}
  {Functions}.
\newblock \emph{Journal of Global Optimization}, 13\penalty0 (4):\penalty0
  455--492, December 1998.
\newblock ISSN 1573-2916.
\newblock \doi{10.1023/A:1008306431147}.
\newblock URL \url{https://doi.org/10.1023/A:1008306431147}.

\bibitem[Kingma \& Welling(2019)Kingma and Welling]{kingma_introduction_2019}
Kingma, D.~P. and Welling, M.
\newblock An {Introduction} to {Variational} {Autoencoders}.
\newblock \emph{Foundations and Trends® in Machine Learning}, 12\penalty0
  (4):\penalty0 307--392, 2019.
\newblock ISSN 1935-8237, 1935-8245.
\newblock \doi{10.1561/2200000056}.
\newblock URL \url{http://arxiv.org/abs/1906.02691}.
\newblock arXiv:1906.02691 [cs, stat].

\bibitem[Lu et~al.(2018)Lu, Gonzalez, Dai, and Lawrence]{lu_structured_2018}
Lu, X., Gonzalez, J., Dai, Z., and Lawrence, N.~D.
\newblock Structured {Variationally} {Auto}-encoded {Optimization}.
\newblock In \emph{Proceedings of the 35th {International} {Conference} on
  {Machine} {Learning}}, pp.\  3267--3275. PMLR, July 2018.
\newblock URL \url{https://proceedings.mlr.press/v80/lu18c.html}.
\newblock ISSN: 2640-3498.

\bibitem[{Martín Abadi} et~al.(2015){Martín Abadi}, {Ashish Agarwal}, {Paul
  Barham}, {Eugene Brevdo}, {Zhifeng Chen}, {Craig Citro}, {Greg S. Corrado},
  {Andy Davis}, {Jeffrey Dean}, {Matthieu Devin}, {Sanjay Ghemawat}, {Ian
  Goodfellow}, {Andrew Harp}, {Geoffrey Irving}, {Michael Isard}, Jia, {Rafal
  Jozefowicz}, {Lukasz Kaiser}, {Manjunath Kudlur}, {Josh Levenberg},
  {Dandelion Mané}, {Rajat Monga}, {Sherry Moore}, {Derek Murray}, {Chris
  Olah}, {Mike Schuster}, {Jonathon Shlens}, {Benoit Steiner}, {Ilya
  Sutskever}, {Kunal Talwar}, {Paul Tucker}, {Vincent Vanhoucke}, {Vijay
  Vasudevan}, {Fernanda Viégas}, {Oriol Vinyals}, {Pete Warden}, {Martin
  Wattenberg}, {Martin Wicke}, {Yuan Yu}, and {Xiaoqiang
  Zheng}]{martin_abadi_tensorflow_2015}
{Martín Abadi}, {Ashish Agarwal}, {Paul Barham}, {Eugene Brevdo}, {Zhifeng
  Chen}, {Craig Citro}, {Greg S. Corrado}, {Andy Davis}, {Jeffrey Dean},
  {Matthieu Devin}, {Sanjay Ghemawat}, {Ian Goodfellow}, {Andrew Harp},
  {Geoffrey Irving}, {Michael Isard}, Jia, Y., {Rafal Jozefowicz}, {Lukasz
  Kaiser}, {Manjunath Kudlur}, {Josh Levenberg}, {Dandelion Mané}, {Rajat
  Monga}, {Sherry Moore}, {Derek Murray}, {Chris Olah}, {Mike Schuster},
  {Jonathon Shlens}, {Benoit Steiner}, {Ilya Sutskever}, {Kunal Talwar}, {Paul
  Tucker}, {Vincent Vanhoucke}, {Vijay Vasudevan}, {Fernanda Viégas}, {Oriol
  Vinyals}, {Pete Warden}, {Martin Wattenberg}, {Martin Wicke}, {Yuan Yu}, and
  {Xiaoqiang Zheng}.
\newblock {TensorFlow}: {Large}-{Scale} {Machine} {Learning} on {Heterogeneous}
  {Systems}, 2015.
\newblock URL \url{https://www.tensorflow.org/}.

\bibitem[Matthews et~al.(2017)Matthews, van~der Wilk, Nickson, Fujii,
  Boukouvalas, León-Villagrá, Ghahramani, and Hensman]{matthews_gpflow_2017}
Matthews, A. G. d.~G., van~der Wilk, M., Nickson, T., Fujii, K., Boukouvalas,
  A., León-Villagrá, P., Ghahramani, Z., and Hensman, J.
\newblock {GPflow}: {A} {Gaussian} process library using {TensorFlow}.
\newblock \emph{Journal of Machine Learning Research}, 18\penalty0
  (40):\penalty0 1--6, April 2017.
\newblock URL \url{http://jmlr.org/papers/v18/16-537.html}.

\bibitem[Maus et~al.(2023)Maus, Wu, Eriksson, and
  Gardner]{maus_discovering_2023}
Maus, N., Wu, K., Eriksson, D., and Gardner, J.
\newblock Discovering {Many} {Diverse} {Solutions} with {Bayesian}
  {Optimization}, May 2023.
\newblock URL \url{http://arxiv.org/abs/2210.10953}.
\newblock arXiv:2210.10953 [cs].

\bibitem[Maynard~Smith(1970)]{maynard_smith_natural_1970}
Maynard~Smith, J.
\newblock Natural {Selection} and the {Concept} of a {Protein} {Space}.
\newblock \emph{Nature}, 225\penalty0 (5232):\penalty0 563--564, February 1970.
\newblock ISSN 1476-4687.
\newblock \doi{10.1038/225563a0}.
\newblock URL \url{https://www.nature.com/articles/225563a0}.
\newblock Number: 5232 Publisher: Nature Publishing Group.

\bibitem[Moss et~al.(2020)Moss, Leslie, Beck, Gonzalez, and
  Rayson]{moss_boss_2020}
Moss, H., Leslie, D., Beck, D., Gonzalez, J., and Rayson, P.
\newblock Boss: {Bayesian} optimization over string spaces.
\newblock \emph{Advances in neural information processing systems},
  33:\penalty0 15476--15486, 2020.

\bibitem[Močkus(1975)]{mockus_bayesian_1975}
Močkus, J.
\newblock On {Bayesian} {Methods} for {Seeking} the {Extremum}.
\newblock In Marchuk, G.~I. (ed.), \emph{Optimization {Techniques} {IFIP}
  {Technical} {Conference}: {Novosibirsk}, {July} 1–7, 1974}, Lecture {Notes}
  in {Computer} {Science}, pp.\  400--404. Springer, Berlin, Heidelberg, 1975.
\newblock ISBN 978-3-662-38527-2.
\newblock \doi{10.1007/978-3-662-38527-2_55}.
\newblock URL \url{https://doi.org/10.1007/978-3-662-38527-2_55}.

\bibitem[Notin et~al.(2022)Notin, Van~Niekerk, Kollasch, Ritter, Gal, and
  Marks]{notin_trancepteve_2022}
Notin, P.~M., Van~Niekerk, L., Kollasch, A.~W., Ritter, D., Gal, Y., and Marks,
  D.
\newblock {TranceptEVE}: {Combining} {Family}-specific and {Family}-agnostic
  {Models} of {Protein} {Sequences} for {Improved} {Fitness} {Prediction}.
\newblock preprint, Genetics, December 2022.
\newblock URL \url{http://biorxiv.org/lookup/doi/10.1101/2022.12.07.519495}.

\bibitem[Picheny et~al.(2023)Picheny, Berkeley, Moss, Stojic, Granta, Ober,
  Artemev, Ghani, Goodall, Paleyes, Vakili, Pascual-Diaz, Markou, Qing, Loka,
  and Couckuyt]{picheny_trieste_2023}
Picheny, V., Berkeley, J., Moss, H.~B., Stojic, H., Granta, U., Ober, S.~W.,
  Artemev, A., Ghani, K., Goodall, A., Paleyes, A., Vakili, S., Pascual-Diaz,
  S., Markou, S., Qing, J., Loka, N. R. B.~S., and Couckuyt, I.
\newblock Trieste: {Efficiently} {Exploring} {The} {Depths} of {Black}-box
  {Functions} with {TensorFlow}, 2023.
\newblock URL \url{https://arxiv.org/abs/2302.08436}.

\bibitem[Rasmussen \& Williams(2006)Rasmussen and
  Williams]{rasmussen_gaussian_2006}
Rasmussen, C.~E. and Williams, C. K.~I.
\newblock \emph{Gaussian processes for machine learning}.
\newblock Adaptive computation and machine learning. MIT Press, Cambridge,
  Mass, 2006.
\newblock ISBN 978-0-262-18253-9.
\newblock OCLC: ocm61285753.

\bibitem[Riesselman et~al.(2018)Riesselman, Ingraham, and
  Marks]{riesselman_deep_2018}
Riesselman, A.~J., Ingraham, J.~B., and Marks, D.~S.
\newblock Deep generative models of genetic variation capture the effects of
  mutations.
\newblock \emph{Nature Methods}, 15\penalty0 (10):\penalty0 816--822, October
  2018.
\newblock ISSN 1548-7105.
\newblock \doi{10.1038/s41592-018-0138-4}.
\newblock URL \url{https://www.nature.com/articles/s41592-018-0138-4}.
\newblock Number: 10 Publisher: Nature Publishing Group.

\bibitem[Shahriari et~al.(2016)Shahriari, Swersky, Wang, Adams, and
  De~Freitas]{shahriari_taking_2016}
Shahriari, B., Swersky, K., Wang, Z., Adams, R.~P., and De~Freitas, N.
\newblock Taking the {Human} {Out} of the {Loop}: {A} {Review} of {Bayesian}
  {Optimization}.
\newblock \emph{Proceedings of the IEEE}, 104\penalty0 (1):\penalty0 148--175,
  January 2016.
\newblock ISSN 0018-9219, 1558-2256.
\newblock \doi{10.1109/JPROC.2015.2494218}.
\newblock URL \url{https://ieeexplore.ieee.org/document/7352306/}.

\bibitem[Srinivas et~al.(2012)Srinivas, Krause, Kakade, and
  Seeger]{srinivas_gaussian_2012}
Srinivas, N., Krause, A., Kakade, S.~M., and Seeger, M.
\newblock Gaussian {Process} {Optimization} in the {Bandit} {Setting}: {No}
  {Regret} and {Experimental} {Design}.
\newblock \emph{IEEE Transactions on Information Theory}, 58\penalty0
  (5):\penalty0 3250--3265, May 2012.
\newblock ISSN 0018-9448, 1557-9654.
\newblock \doi{10.1109/TIT.2011.2182033}.
\newblock URL \url{http://arxiv.org/abs/0912.3995}.
\newblock arXiv:0912.3995 [cs].

\bibitem[Stanton et~al.(2022)Stanton, Maddox, Gruver, Maffettone, Delaney,
  Greenside, and Wilson]{stanton_accelerating_2022}
Stanton, S., Maddox, W., Gruver, N., Maffettone, P., Delaney, E., Greenside,
  P., and Wilson, A.~G.
\newblock Accelerating {Bayesian} {Optimization} for {Biological} {Sequence}
  {Design} with {Denoising} {Autoencoders}.
\newblock Technical Report arXiv:2203.12742, arXiv, July 2022.
\newblock URL \url{http://arxiv.org/abs/2203.12742}.
\newblock arXiv:2203.12742 [cs, q-bio, stat] type: article.

\bibitem[Swersky et~al.(2013)Swersky, Snoek, and
  Adams]{swersky_multi-task_2013}
Swersky, K., Snoek, J., and Adams, R.~P.
\newblock Multi-{Task} {Bayesian} {Optimization}.
\newblock In \emph{Advances in {Neural} {Information} {Processing} {Systems}},
  volume~26. Curran Associates, Inc., 2013.
\newblock URL
  \url{https://proceedings.neurips.cc/paper/2013/hash/f33ba15effa5c10e873bf3842afb46a6-Abstract.html}.

\bibitem[Swersky et~al.(2020)Swersky, Rubanova, Dohan, and
  Murphy]{swersky_amortized_2020}
Swersky, K., Rubanova, Y., Dohan, D., and Murphy, K.
\newblock Amortized {Bayesian} {Optimization} over {Discrete} {Spaces}.
\newblock In \emph{Proceedings of the 36th {Conference} on {Uncertainty} in
  {Artificial} {Intelligence} ({UAI})}, pp.\  769--778. PMLR, August 2020.
\newblock URL \url{https://proceedings.mlr.press/v124/swersky20a.html}.
\newblock ISSN: 2640-3498.

\bibitem[Tian \& Best(2017)Tian and Best]{tian_how_2017}
Tian, P. and Best, R.~B.
\newblock How {Many} {Protein} {Sequences} {Fold} to a {Given} {Structure}? {A}
  {Coevolutionary} {Analysis}.
\newblock \emph{Biophysical Journal}, 113\penalty0 (8):\penalty0 1719--1730,
  October 2017.
\newblock ISSN 00063495.
\newblock \doi{10.1016/j.bpj.2017.08.039}.
\newblock URL
  \url{https://linkinghub.elsevier.com/retrieve/pii/S0006349517309347}.

\bibitem[Tripp et~al.(2020)Tripp, Daxberger, and
  Hernández-Lobato]{tripp_sample-efficient_2020}
Tripp, A., Daxberger, E., and Hernández-Lobato, J.~M.
\newblock Sample-{Efficient} {Optimization} in the {Latent} {Space} of {Deep}
  {Generative} {Models} via {Weighted} {Retraining}.
\newblock In \emph{Advances in {Neural} {Information} {Processing} {Systems}},
  volume~33, pp.\  11259--11272. Curran Associates, Inc., 2020.
\newblock URL
  \url{https://proceedings.neurips.cc/paper/2020/hash/81e3225c6ad49623167a4309eb4b2e75-Abstract.html}.

\bibitem[Virtanen et~al.(2020)Virtanen, Gommers, Oliphant, Haberland, Reddy,
  Cournapeau, Burovski, Peterson, Weckesser, Bright, van~der Walt, Brett,
  Wilson, Millman, Mayorov, Nelson, Jones, Kern, Larson, Carey, Polat, Feng,
  Moore, VanderPlas, Laxalde, Perktold, Cimrman, Henriksen, Quintero, Harris,
  Archibald, Ribeiro, Pedregosa, van Mulbregt, and {SciPy 1.0
  Contributors}]{virtanen_scipy_2020}
Virtanen, P., Gommers, R., Oliphant, T.~E., Haberland, M., Reddy, T.,
  Cournapeau, D., Burovski, E., Peterson, P., Weckesser, W., Bright, J.,
  van~der Walt, S.~J., Brett, M., Wilson, J., Millman, K.~J., Mayorov, N.,
  Nelson, A. R.~J., Jones, E., Kern, R., Larson, E., Carey, C.~J., Polat, I.,
  Feng, Y., Moore, E.~W., VanderPlas, J., Laxalde, D., Perktold, J., Cimrman,
  R., Henriksen, I., Quintero, E.~A., Harris, C.~R., Archibald, A.~M., Ribeiro,
  A.~H., Pedregosa, F., van Mulbregt, P., and {SciPy 1.0 Contributors}.
\newblock {SciPy} 1.0: {Fundamental} {Algorithms} for {Scientific} {Computing}
  in {Python}.
\newblock \emph{Nature Methods}, 17:\penalty0 261--272, 2020.
\newblock \doi{10.1038/s41592-019-0686-2}.

\bibitem[Wan et~al.(2021)Wan, Nguyen, Ha, Ru, Lu, and Osborne]{wan_think_2021}
Wan, X., Nguyen, V., Ha, H., Ru, B., Lu, C., and Osborne, M.~A.
\newblock Think {Global} and {Act} {Local}: {Bayesian} {Optimisation} over
  {High}-{Dimensional} {Categorical} and {Mixed} {Search} {Spaces}.
\newblock In \emph{Proceedings of the 38th {International} {Conference} on
  {Machine} {Learning}}, pp.\  10663--10674. PMLR, July 2021.
\newblock URL \url{https://proceedings.mlr.press/v139/wan21b.html}.
\newblock ISSN: 2640-3498.

\end{thebibliography}
\bibliographystyle{icml2024}


\appendix

\newpage
\appendix
\onecolumn
\section{Proof that the relaxed objective has the same optima}
\label{sec:same_optima}
\begin{proposition}
\label{thm:same_optima}
For $f$ and $\bar{f}$: $\argmin f=\argmin \bar{f}$.
\end{proposition}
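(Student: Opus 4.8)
The plan is to exploit that $\bar{f}$ is an affine (indeed linear) functional of $\mathbf p$ on the simplex $\mathbb{P}$, so that its minimum is controlled entirely by the values $f$ takes at the vertices $\mathbbm{1}_{\vec x}$. Since $\mathbb{X}=\bigcup_{l=1}^{L}A^{l}$ is a \emph{finite} set, $f$ attains its minimum; write $f_{*}:=\min_{\vec x\in\mathbb{X}}f(\vec x)$, so $\argmin f\neq\emptyset$. I would also state at the outset that the identity $\argmin f=\argmin\bar f$ is meant under the embedding $\vec x\mapsto\mathbbm{1}_{\vec x}$ from Eq.~(1), since literally $\argmin f\subseteq\mathbb{X}$ while $\argmin\bar f\subseteq\mathbb{P}$; getting this bookkeeping right is the only delicate point.

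First I would record the two elementary facts that do all the work: (i) $\bar f(\mathbbm{1}_{\vec x})=f(\vec x)$ for every $\vec x\in\mathbb{X}$, immediate from the definition of $\bar f$; and (ii) for every $\mathbf p\in\mathbb{P}$,
\[
\bar f(\mathbf p)=\sum_{\vec x\in\mathbb{X}}f(\vec x)\,\mathbf p_{\vec x}\;\geq\;f_{*}\sum_{\vec x\in\mathbb{X}}\mathbf p_{\vec x}=f_{*},
\]
using $\mathbf p_{\vec x}\geq 0$ and $\sum_{\vec x}\mathbf p_{\vec x}=1$. Applying (i) at any $\vec x_{*}\in\argmin f$ together with (ii) shows $\min_{\mathbf p\in\mathbb{P}}\bar f(\mathbf p)=f_{*}$ and that this value is attained; in particular the optimal values of $f$ and $\bar f$ coincide.

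Next I would characterize the minimizing set. From the chain of inequalities above, $\bar f(\mathbf p)=f_{*}$ holds iff $\sum_{\vec x}(f(\vec x)-f_{*})\,\mathbf p_{\vec x}=0$; since every summand is nonnegative, this forces $\mathbf p_{\vec x}=0$ whenever $f(\vec x)>f_{*}$. Hence
\[
\argmin\bar f=\bigl\{\,\mathbf p\in\mathbb{P}:\operatorname{supp}(\mathbf p)\subseteq\argmin f\,\bigr\}.
\]
In particular, among the atomic distributions $\{\mathbbm{1}_{\vec x}:\vec x\in\mathbb{X}\}$, precisely those with $\vec x\in\argmin f$ lie in $\argmin\bar f$, which under the identification $\vec x\leftrightarrow\mathbbm{1}_{\vec x}$ is exactly the claimed equality; the displayed formula moreover shows the relaxation creates no spurious optima beyond convex combinations of genuine minimizers of $f$. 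There is no real obstacle here — the whole argument is a one-line consequence of linearity on the simplex — so the write-up is mostly about stating the domain identification cleanly and, optionally, including the support characterization so the reader sees the full minimizer set rather than only its atomic part.
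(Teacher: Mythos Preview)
Your proof is correct and follows essentially the same idea as the paper's: bound $\bar f(\mathbf p)$ by the extremum of $f$ using that $\mathbf p$ is a probability vector, then attain the bound at an atomic distribution $\mathbbm{1}_{\vec x_*}$. Your version is in fact tidier than the paper's own proof, which (somewhat inconsistently) argues the $\argmax$ case despite the proposition being stated for $\argmin$, and which stops after matching optimal values; your additional support characterization $\argmin\bar f=\{\mathbf p:\operatorname{supp}(\mathbf p)\subseteq\argmin f\}$ and the explicit discussion of the identification $\vec x\leftrightarrow\mathbbm{1}_{\vec x}$ make the equality of $\argmin$ sets precise rather than leaving it implicit.
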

\begin{proof}
\begin{align*}
	\bar{f}(\vec p)&=\sum_{\vec x}f(\vec x)p(\vec x)
	\\&\leq \sum_{\vec x}\max_{\vec x'}f(\vec x')p(\vec x)
	\\&=\max_{\vec x'}f(\vec x') \sum_{\vec x}p(\vec x)
	\\&=\max_{\vec x'}f(\vec x')
\end{align*}

On the other hand: for an optimal $\vec x_*$, \it{i.e.} $f(\vec x_*)=\max_{\vec x'}f(\vec x')$ and choose $\vec p_*(\vec x)\ce \mathbf{1}_{\vec x=\vec x_*}(\vec x)$, then
$\bar{f}(\vec p_*)=\sum_{\vec x}f(\vec x)p_*(\vec x)f(\vec x)=\max_{\vec x'}f(\vec x')$.
\end{proof}

\section{The weighted Hellinger distance}
\begin{proposition}
\label{thm:weighted_hd}
The squared weighted Hellinger distance 
$$HD_{\vec p}(\vec q, \vec r)^2\ce \frac{1}{2}\sum_{\vec x}\vec p(\vec x)\left(\sqrt{\vec q(\vec x)}-\sqrt{\vec r(\vec x)}\right)^2$$
is negative definite.
\end{proposition}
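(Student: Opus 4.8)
The plan is to exhibit $HD_{\vec p}(\cdot,\cdot)^2$ as one half of the squared distance induced by an explicit feature map into a Hilbert space, and then invoke the classical fact that such squared distances are conditionally negative definite.

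First, I would expand the square inside the sum: since $\bigl(\sqrt{\vec q(\vec x)}-\sqrt{\vec r(\vec x)}\bigr)^2 = \vec q(\vec x) - 2\sqrt{\vec q(\vec x)\vec r(\vec x)} + \vec r(\vec x)$, I define the feature map $\Phi$ by $\Phi(\vec q)_{\vec x}\ce \sqrt{\vec p(\vec x)\,\vec q(\vec x)}$, which is real and well-defined because $\vec p,\vec q\ge 0$, and which lands in $\ell^2(\mathbb{X})$. A one-line computation then gives $\|\Phi(\vec q)-\Phi(\vec r)\|_2^2 = \sum_{\vec x}\vec p(\vec x)\bigl(\sqrt{\vec q(\vec x)}-\sqrt{\vec r(\vec x)}\bigr)^2 = 2\,HD_{\vec p}(\vec q,\vec r)^2$, i.e.\ $HD_{\vec p}(\vec q,\vec r)^2 = \tfrac12\|\Phi(\vec q)-\Phi(\vec r)\|_2^2$.

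Second, I would invoke (or reprove in two lines) the standard fact that every map of the form $(a,b)\mapsto\|\Phi(a)-\Phi(b)\|^2$ is negative definite: symmetry and vanishing on the diagonal are immediate, and for any finite collection $\vec q_1,\dots,\vec q_n$ and coefficients $c_1,\dots,c_n$ with $\sum_i c_i=0$, expanding the squared norm and using $\sum_i c_i=0$ to kill the two $\|\Phi(\vec q_i)\|^2$ terms leaves $\sum_{i,j}c_ic_j\|\Phi(\vec q_i)-\Phi(\vec q_j)\|^2 = -2\bigl\|\sum_i c_i\Phi(\vec q_i)\bigr\|^2\le 0$. Combined with the identity above (the factor $\tfrac12$ is irrelevant), this is exactly negative definiteness of $HD_{\vec p}^2$. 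As a byproduct, the Schoenberg-type result already cited (\cite{feragen_geodesic_2015}) upgrades this to positive definiteness of $\theta\exp(-\lambda\,HD_{\vec p})$, and the argument specializes to $\vec p\equiv 1$ to re-derive the plain Hellinger case.

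The one point that needs care — and the only genuine, if minor, obstacle — is well-definedness of the sums when the weighting $\vec p$ is unbounded and $\mathbb{X}$ is infinite. On the restricted domain $\factorP$ this is a non-issue: the relevant distributions factorize over $L$ positions, so the products collapse to finite sums exactly as in the rewriting of \cref{eq:hellinger}. For the statement on all of $\mathbb{P}$ one should add a mild hypothesis (e.g.\ $\vec p$ bounded, or $\sum_{\vec x}\vec p(\vec x)\vec q(\vec x)<\infty$ for the distributions under consideration), which by Cauchy–Schwarz guarantees $\Phi(\vec q)\in\ell^2(\mathbb{X})$ and hence that every quantity above is finite; everything else is routine.
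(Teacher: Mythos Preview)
Your proof is correct and is essentially the same as the paper's: the paper directly verifies conditional negative definiteness by expanding the square, using $\sum_n c_n=0$ to kill the cross terms, and arriving at $-\sum_{\vec x}\vec p(\vec x)\bigl(\sum_n c_n\sqrt{\vec q_n(\vec x)}\bigr)^2\le 0$, which is precisely the two-line computation you give in your second step. Your feature-map framing $\Phi(\vec q)_{\vec x}=\sqrt{\vec p(\vec x)\vec q(\vec x)}$ is a clean conceptual wrapper around that same calculation (and your caveat about summability is well taken but unnecessary here, since in the paper $\mathbb{X}$ is finite).
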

\begin{proof}
The proof follows \citet{harandi_beyond_2015}. 
Let $N\in\mathbb{N}$ and $c_1,\dots,c_N\in\Re$ s.t.~$\sum_{n=1}^N c_n=0$.
\begin{align}
&\sum_{n,m=1}^N c_n c_m HD_{\vec p}(\vec q_n, \vec q_m)^2
\\&=\frac{1}{2}\sum_{n,m=1}^N c_n c_m\sum_{\vec x\in\mathbb{X}}\vec p(\vec x)\left(\sqrt{\vec q_n(\vec x)}-\sqrt{\vec q_m(\vec x)}\right)^2
\eqcomment{by definition}
\\&=\frac{1}{2}\sum_{n,m=1}^N c_n c_m\sum_{\vec x\in\mathbb{X}}\vec p(\vec x)\left(\vec q_n(\vec x)+\vec q_m(\vec x)-2\sqrt{\vec q_n(\vec x)\vec q_m(\vec x)} \right)\eqcomment{expanding the square}
\\&= \sum_{\vec x\in\mathbb{X}}\vec p(\vec x)\left(\sum_{n=1}^N c_n \vec q_n(\vec x)\sum_{m=1}^N c_m+\sum_{m=1}^N c_m \vec q_m(\vec x)\sum_{n=1}^N c_n - \sum_{n=1}^N c_n \sqrt{\vec q_n(\vec x)} \sum_{m=1}^N c_m\sqrt{\vec q_m(\vec x)}\right) 
\eqcomment{changing order of summation}
\\&= -\sum_{\vec x\in\mathbb{X}}\vec p(\vec x)\sum_{n=1}^N c_n \sqrt{\vec q_n(\vec x)} \sum_{m=1}^N c_m\sqrt{\vec q_m(\vec x)} 
\eqcomment{since $\sum_{n=1}^N c_n=0$}
\\&= -\sum_{\vec x\in\mathbb{X}}\vec p(\vec x)\left(\sum_{n=1}^N c_n \sqrt{\vec q_n(\vec x)}\right)^2 
\eqcomment{writing the identical sums over $n$ and $m$ as a square}
\\&\leq 0
\end{align}
\end{proof}

To show that the square-root of the tilted Hellinger distance is a kernel, we follow the same reasoning as in \citet{harandi_beyond_2015}.

\section{Efficient evaluation of the Hellinger distance}
\label{sec:efficient_hd}
\begin{proposition}
\label{thm:efficient_hd}
For product measures $\vec p, \vec q\in \mathbb{P}_f$, the Hellinger distance can be written as 
$$HD(\vec p, \vec q)^2=1-\prod_{l=1}^L\sum_{a_l=1}^A \sqrt{\vec p[a_l, l]\vec q[a_l, l]}$$
\end{proposition}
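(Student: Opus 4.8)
The plan is to first collapse the squared Hellinger distance into the Bhattacharyya coefficient, and then exploit the product structure of $\vec p,\vec q\in\mathbb{P}_f$ to factorize that coefficient coordinate-wise. Expanding the square in \cref{eq:hellinger} and using $\sum_{\vec x}\vec p(\vec x)=\sum_{\vec x}\vec q(\vec x)=1$ gives
\[
HD(\vec p,\vec q)^2=\frac{1}{2}\sum_{\vec x\in\mathbb{X}}\!\left(\vec p(\vec x)+\vec q(\vec x)-2\sqrt{\vec p(\vec x)\vec q(\vec x)}\right)=1-\sum_{\vec x\in\mathbb{X}}\sqrt{\vec p(\vec x)\vec q(\vec x)},
\]
so it remains to show that the Bhattacharyya coefficient factorizes as $\sum_{\vec x\in\mathbb{X}}\sqrt{\vec p(\vec x)\vec q(\vec x)}=\prod_{l=1}^L\sum_{a=1}^A\sqrt{\vec p[a,l]\vec q[a,l]}$.

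Next I would make the relevant index set explicit: an element $\vec p\in\mathbb{P}_f$ is a factorizing distribution supported on sequences of length exactly $L$, so that $\vec p(\vec x)=\prod_{l=1}^L\vec p[x_l,l]$ for $\vec x=(x_1,\dots,x_L)\in A^L$, and the sum over $\mathbb{X}$ reduces to a sum over the product set $A^L$. Then $\sqrt{\vec p(\vec x)\vec q(\vec x)}=\prod_{l=1}^L\sqrt{\vec p[x_l,l]\vec q[x_l,l]}$, and summing a product of per-coordinate terms over all of $A^L$ factorizes by the generalized distributive law,
\[
\sum_{\vec x\in A^L}\prod_{l=1}^L\sqrt{\vec p[x_l,l]\vec q[x_l,l]}=\prod_{l=1}^L\left(\sum_{a=1}^A\sqrt{\vec p[a,l]\vec q[a,l]}\right),
\]
which I would justify by a short induction on $L$, peeling the $l=L$ factor out of the sum over $x_L$. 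Substituting this back into the first display yields the claim.

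The computation is essentially routine, so the only point that needs care is the bookkeeping around the domain: $\mathbb{X}=\cup_{l=1}^L A^l$ is a union over all lengths up to $L$, whereas $\mathbb{P}_f$ as defined lives on fixed length $L$, and it is precisely this fixed-length product structure that makes the distributive step legitimate. I would also remark in passing that the stated $\mathcal{O}(L\times A)$ evaluation cost is immediate from the right-hand side, being a product of $L$ sums of $A$ terms each.
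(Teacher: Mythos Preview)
Your proof is correct and follows essentially the same approach as the paper: expand the square to reduce to the Bhattacharyya coefficient $1-\sum_{\vec x}\sqrt{\vec p(\vec x)\vec q(\vec x)}$, then use the product-measure structure and the distributive law to factorize the sum over $A^L$ into a product of per-position sums. If anything, you are slightly more careful than the paper about the domain bookkeeping (the distinction between $\mathbb{X}=\cup_{l=1}^L A^l$ and the fixed-length support of $\mathbb{P}_f$) and about justifying the distributive step via induction, whereas the paper simply annotates these as ``rewriting the sum'' and ``moving unaffected parts of the product out of the sum.''
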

\begin{proof}
\begin{align}
HD(\vec p, \vec q)^2&=\frac{1}{2}\sum_{\vec x\in\mathbb{X}}\left(\sqrt{\vec p(\vec x)}-\sqrt{\vec q(\vec x)}\right)^2 
\eqcomment{expanding the square and using that $\vec p$ and $\vec q$ sum to 1.}
\\&=1-\sum_{\vec x\in\mathbb{X}}\sqrt{\vec p(\vec x)\vec q(\vec x)} \eqcomment{property of the Hellinger distance}
\\&=1-\underbrace{\sum_{a=1}^A \dots \sum_{a=1}^A}_{L \text{ times}}\sqrt{\vec p(a_1, \dots, a_L)\vec q(a_1, \dots, a_L)}
\eqcomment{rewriting the sum}
\\&=1-\underbrace{\sum_{a_1=1}^A \dots \sum_{a_L=1}^A}_{L \text{ times}}\sqrt{\prod_{l=1}^L\vec p[a_l, l]\vec q[a_l, l]}
\eqcomment{using $\vec p, \vec q \in \mathbb{P}_f$}
\\&=1-\underbrace{\sum_{a_1=1}^A \dots \sum_{a_L=1}^A}_{L \text{ times}}\prod_{l=1}^L\sqrt{\vec p[a_l, l]\vec q[a_l, l]}
\eqcomment{moving the square-root}
\\&=1-\sum_{a_1=1}^A \sqrt{\vec p[a_1, 1]\vec q[a_1, 1]}\ \dot \ \dots\ \dot \  \sum_{a_L=1}^A\sqrt{\vec p[a_L, L]\vec q[a_L, L]}
\eqcomment{moving unaffected parts of the product out of the sum}
\\&=1-\prod_{l=1}^L\sum_{a_l=1}^A \sqrt{\vec p[a_l, l]\vec q[a_l, l]}
\eqcomment{rearranging}
\end{align}
\end{proof}

\begin{proposition}
For product measures $\vec p, \vec q, \vec r\in \mathbb{P}_f$, the weighted Hellinger distance can be written as 
\begin{align*}
   HD(\vec p, \vec q, \vec r)^2 &= \prod_{l=1}^L\sum_{a_l=1}^A \left[\frac{1}{2} \vec r[a_l,l] \vec p[a_l, l]
+ \frac{1}{2}  \vec r[a_l,l] \vec q[a_l, l]\right]
-\prod_{l=1}^L\sum_{a_l=1}^A \vec r[a_l, l] \sqrt{\vec p[a_l, l]\vec q[a_l, l]} \\
&= \prod_{l=1}^L\sum_{a_l=1}^A \frac{1}{2}\mathbb{E}_p[\vec x]\vec r[a_l, l] + \frac{1}{2} \mathbb{E}_q[\vec x]\vec r[a_l, l]
- \prod_{l=1}^L\sum_{a_l=1}^A \vec r[a_l, l]\sqrt{\vec p[a_l, l]\vec q[a_l, l]}. 
\end{align*}
\end{proposition}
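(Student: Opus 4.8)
The plan is to mirror the factorization argument from \cref{thm:efficient_hd}, now applied term by term after expanding the square. Here $\vec r$ plays the role of the weighting distribution (the subscript $\vec p$ in \cref{thm:weighted_hd}) while $\vec p,\vec q$ are the two product measures being compared. Expanding $\left(\sqrt{\vec p(\vec x)}-\sqrt{\vec q(\vec x)}\right)^2$ and noting that the $2$ from the cross term cancels the leading $\tfrac12$ gives
\[
HD(\vec p, \vec q, \vec r)^2 = \tfrac12\sum_{\vec x\in\mathbb{X}}\vec r(\vec x)\vec p(\vec x) + \tfrac12\sum_{\vec x\in\mathbb{X}}\vec r(\vec x)\vec q(\vec x) - \sum_{\vec x\in\mathbb{X}}\vec r(\vec x)\sqrt{\vec p(\vec x)\vec q(\vec x)}.
\]
Unlike the unweighted case in \cref{thm:efficient_hd}, the first two sums do \emph{not} collapse to $1$, since $\vec r(\vec x)\vec p(\vec x)$ need not be normalized; they have to be carried along.

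The second step is to insert the product structure $\vec r(\vec x)=\prod_{l=1}^{L}\vec r[a_l,l]$, and likewise for $\vec p$ and $\vec q$, where $\vec x=(a_1,\dots,a_L)\in A^L=\mathbb{X}$. Then each of the three sums has the form $\sum_{a_1,\dots,a_L}\prod_{l=1}^{L}(\cdots)$, which, exactly as in the final lines of the proof of \cref{thm:efficient_hd}, factors as $\prod_{l=1}^{L}\sum_{a_l=1}^{A}(\cdots)$ once the factors not involving $a_l$ are pulled out of each inner sum; for the cross term one first uses $\sqrt{\prod_l \vec p[a_l,l]\vec q[a_l,l]}=\prod_l\sqrt{\vec p[a_l,l]\vec q[a_l,l]}$. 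Recombining the three factored expressions yields the first displayed identity in the statement.

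Finally, the second displayed identity is a cosmetic rewrite of the first, re-expressing the per-coordinate sums $\sum_{a_l}\vec r[a_l,l]\vec p[a_l,l]$ and $\sum_{a_l}\vec r[a_l,l]\vec q[a_l,l]$ through the coordinate marginals of $\vec p$ and $\vec q$; no further computation is needed. I do not anticipate a genuine obstacle, as the argument is essentially the one already used for \cref{thm:efficient_hd}. The only things to watch are bookkeeping: keeping the weight $\vec r$ in the correct linear slot of every term, tracking the $\tfrac12$ that survives on the two diagonal terms, and resisting the temptation to simplify those diagonal terms to $1$ as in the unweighted setting.
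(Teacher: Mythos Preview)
Your proposal is correct and follows essentially the same route as the paper: expand the square, observe that the weighted diagonal terms $\sum_{\vec x}\vec r(\vec x)\vec p(\vec x)$ and $\sum_{\vec x}\vec r(\vec x)\vec q(\vec x)$ do not normalize to $1$ and must be carried, then insert the product-measure structure and apply the sum--product interchange from \cref{thm:efficient_hd} to each of the three terms separately. The paper's proof is exactly this computation written out line by line.
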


\begin{proof}
\begin{align}
HD_{\vec r}(\vec p, \vec q)^2
&=\frac{1}{2}\sum_{\vec x\in\mathbb{X}}r(\vec x)\left(\sqrt{\vec p(\vec x)}-\sqrt{\vec q(\vec x)}\right)^2 
\\&=\frac{1}{2}\sum_{\vec x\in\mathbb{X}}r(\vec x)\left(\vec p(\vec x)-2\sqrt{\vec p(x)\vec q(\vec x)}+q(\vec x
)\right) 
\eqcomment{expanding the square}
\\&=\frac{1}{2}\sum_{\vec x\in\mathbb{X}}\left[\vec r(\vec x )\vec p(\vec x) 
- 2 \vec r(\vec x)\sqrt{\vec p(\vec x)\vec q(\vec x)} 
+ \vec r(\vec x) \vec q(\vec x)\right]
\eqcomment{note that $\sum_{\vec x\in\mathbb{X}} \vec r(\vec x) \vec p(\vec x) \neq 1$}
\\&=\frac{1}{2}\sum_{\vec x\in\mathbb{X}} \vec r(\vec x) \vec p(\vec x) 
+ \frac{1}{2}\sum_{\vec x\in\mathbb{X}} \vec r(\vec x) \vec q(\vec x)  
- \sum_{\vec x\in\mathbb{X}}\vec r(\vec x)\sqrt{\vec p(\vec x)\vec q(\vec x)} 
\eqcomment{property of the Hellinger distance}
\\&=
\frac{1}{2}\underbrace{\sum_{a_1=1}^A \dots \sum_{a_L=1}^A}_{L \text{ times}} \vec r(x_1, \dots, x_L) \vec p(x_1, \dots, x_L)+ \frac{1}{2}\underbrace{\sum_{a_1=1}^A \dots \sum_{a_L=1}^A}_{L \text{ times}} \vec r(x_1, \dots, x_L) \vec q(x_1, \dots, x_L)\\ 
&\ - \underbrace{\sum_{a_1=1}^A \dots \sum_{a_L=1}^A}_{L \text{ times}}\vec r(x_1, \dots, x_L)\sqrt{\vec p(x_1, \dots, x_L)\vec q(x_1, \dots, x_L)}
\eqcomment{factorize}
\\&=
\frac{1}{2}\underbrace{\sum_{a_1=1}^A \dots \sum_{a_L=1}^A}_{L \text{ times}}\prod_{l=1}^L \vec r[a_l, l] \vec p[a_l, l]+ \frac{1}{2}\underbrace{\sum_{a_1=1}^A \dots \sum_{a_L=1}^A}_{L \text{ times}}\prod_{l=1}^L \vec r[a_l, l] \vec q[a_l, l]
\\&\ - \underbrace{\sum_{a_1=1}^A \dots \sum_{a_L=1}^A}_{L \text{ times}}\prod_{l=1}^L\vec r[a_l, l]\sqrt{\vec p[a_l, l]\vec q[a_l, l]}
\eqcomment{rearrange, and sums of products as products of sums - see \ref{sec:efficient_hd}}
\\&=
\prod_{l=1}^L \sum_{a_l=1}^A \frac{1}{2} \vec r[a_l, l] \vec p[a_l, l] + \frac{1}{2} \vec r[a_l, l] \vec q[a_l, l]
- \prod_{l=1}^L \sum_{a_l=1}^A \vec r[a_l, l]\sqrt{\vec p[a_l, l]\vec q[a_l, l]}
\end{align}
\end{proof}

\newpage

\section{\CoRel{} algorithm specifications}
\subsection{A continuous optimization algorithm}
\begin{algorithm}
\caption{\CoRel{} using continuous optimization}
\begin{algorithmic}
\Require acquisition $a:\factorP\rightarrow\Re $, black-box $f:\mathbb{X}\rightarrow \Re $, dataset $\mathcal{D}_1=\{X, y\}$, pretrained LVM $\phi: \Re^D\rightarrow \factorP$
\For{$t \in 1, ..., t_{\max}$}
\State $m\gets$ trainModel($(\mathbbm{1}_{\vec x_i}, \vec y_i)_{i=1}^t$) 
\State $\vec p_* \gets \arg\max_{\vec p}a(\vec p, m)$ 
\State $\vec x\gets$ getSequenceFromDistribution($\vec p_*$)
\State $\mathcal{D}_{t+1} \gets \mathcal{D}_{t} \cup \{\vec x, f(\vec x)\}$
\EndFor
\end{algorithmic}
\label{algo:si:continuous_bo}
\end{algorithm}
\subsection{A discrete optimization algorithm}

\begin{algorithm}
\caption{\CoRel{} using discrete optimization}
\begin{algorithmic}
\Require acquisition $a:\factorP\rightarrow\Re $, black-box $f:\mathbb{X}\rightarrow \Re $, dataset $\mathcal{D}_1=\{X, y\}$, pretrained LVM $\phi: \Re^D\rightarrow \factorP$
\For{$t \in 1, ..., t_{\max}$}
\State $m\gets$ trainModel($(\mathbbm{1}_{\vec x_i}, \vec y_i)_{i=1}^t$) 
\State $\vec x \gets \arg\max_{\vec x'}a(\mathbbm{1}_{\vec x'}, m)$ 
\State $\mathcal{D}_{t+1} \gets \mathcal{D}_{t} \cup \{\vec x, f(\vec x)\}$
\EndFor
\end{algorithmic}
\label{algo:si:discrete_bo}
\end{algorithm}

We require a function for finding pareto optimal points, given a set of all points $S$ with $x\subset S$: 
\begin{align}
\label{def:paretofront}
    p_\text{opt}({x}):=\{x\in S | \nexists  x'\in S \text{ s.t. } x'\preceq x \land x'\neq x \}.
\end{align}.
In our experiments pareto optimal points are determined by the $y$ vector.

\section{Baseline implementations and hyperparameters}

\paragraph{Implementation and optimization}
Models and experiments are implemented with Tensorflow \citep{martin_abadi_tensorflow_2015}, Tensorflow-probability, GPflow \citep{matthews_gpflow_2017}, and Trieste \citep{picheny_trieste_2023}. 
Model hyperparameters are optimized using the scipy LBFGS optimizer \citep{virtanen_scipy_2020} on the model likelihood as previously described.

\paragraph{Discrete biological sequence optimization library}
This library contains both the \RFP{} and \GFP{} problem as well as the stable \emph{LamBO} implementation for experimental queries.
We define the RFP problem in generally with FoldX and SASA computations, respective the LamBO defined pareto front.
We additionally include a reference objective that is equivalent to the LamBO setup and includes significantly more sequences.

\paragraph{\emph{CBas} VAE model}
\label{sec:cbas_setup}
The \emph{GFP} problem is presented in \citet{brookes_conditioning_2019} with a VAE as a latent variable model and a custom predictive GP model. 
The full training corpus are 54025 unique sequences for which observations are available.
For the VAE: the encoder is a simple neural network (size=50 units) with ELU activation mapping to 20 latent dimensions.
The decoder is a deep neural network with 3 layers of dimensions [50, 20*len(sequence), 20]
with ELU and softmax activation respectively, such that we first obtain a mapping from latent space to hidden space of size number of amino acids times (aligned) sequence length and ultimately the label-encoded protein sequence.
Model input are aligned GFP sequences.

Training data are 5000 sequences of 20 amino-acid tokens. 
The training data are the upper-quantile (by fluorescence measurement), which is approximately 9\% of all available sequences. 
Training has been done for 100 epochs in batches of 10 using a (tensorflow2) Adam optimizer with default configuration.

The model latent space is used in combination with a predictive GP oracle model as a surrogate for \emph{GFP}.
For a specification of the predictive GP model we refer to \cite{brookes_conditioning_2019}.

The optimization task for the \GFP{} problem is the minimization of the negative oracle values, to find the global minimum - see \cref{fig:continuous_opt}.
For later analysis the sign is inverted, as we ultimately intend to maximize the oracle.

We adapt this architecture to two latent dimensions for visualization purposes only. 
All empirical results queried against the GP utilize the initial model of full dimensionality.
The remaining model specifications remain the same.

\paragraph{LamBO latent optimizer}
\label{sec:lambo_setup}
We use the LamBO implementation directly from the stable tagged submission branch of \citet{stanton_accelerating_2022} (commit 22afec26da0b9ea1810e65f8a60ea7988c021cef Oct-2022) and refer to their exact model specifications.
The only addition we make is to define a dedicated task protocol to work with the discrete sequence optimization library we provide, and associated configurations.\\
We highlight that the original model specification and experiments account for a large range of starting sequences including seed-sequences. 
Additionally, when analyzing the model and observations we find that the relative hypervolume is computed respective a hard-coded set of reference values. 
For our benchmarking purposes we record this value also (see Supplementary \cref{fig:si:lambo_ref_HV}).
In our case we obtain the start values from the black-box function evaluations and compute the relative HV with respect to that. 
We provide the algorithm only with an exact set of 6 RFP PDB files to use, and do not access any provided seed-sequences.

The LamBO RFP data-set contains 50 PDB files (RFP structure files), 754 related sequences, and 1923 generated proxy seed sequences.

\paragraph{Random Mutations} As a naïve baseline, we consider mutating positions at random in the Pareto front. At each iteration, we maintain a population of 16 elements (padding with random mutations if the Pareto front is not large enough). Each of these elements is then mutated in two random positions (taking into account that sequences may have varying lengths, and never performing any inserting/deleting operations). The results provided were averaged over 5 iterations with different random seeds. The performance of this baseline is noteworthy, and the difference between it and e.g. \cite{stanton_accelerating_2022}'s NSGA-2 may be attributed to the fact that we mutate \textit{twice} instead of \textit{once} per iteration.
\newpage

\section{Additional Results}

\begin{figure}[ht]
    \hfill
        \includegraphics[width=0.9\columnwidth]{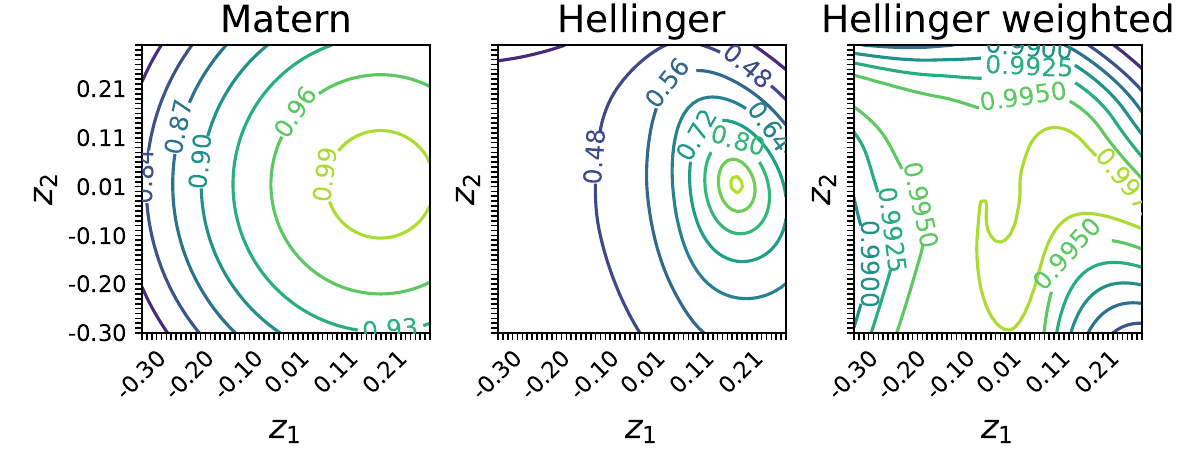}
        \caption{Covariance function values for the \GFP{} (2D) latent space where the reference point is the GFP wild-type sequence. Comparing Matérn 5/2 with the (weighted) Hellinger kernel.
        The reference points corresponds to the \emph{start} point in \cref{fig:latent_space}.
        }
        \label{fig:hellinger_contour_gfp}
\end{figure}
\vspace{1em}

\begin{figure}[ht]
    \centering
    \includegraphics[width=0.4\textwidth]{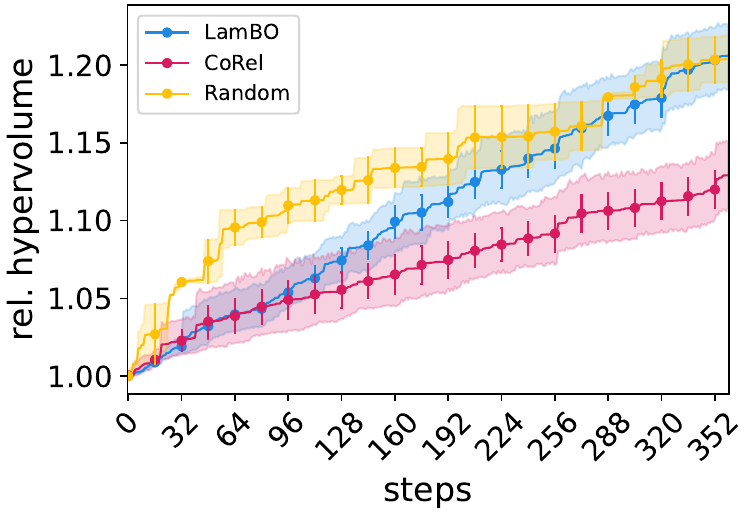} 
    \caption{Discretely optimizating \RFP{}. We compare against LamBO in the warm-start setting. Starting N=50, batch-size=16 across seven seeds (random two seeds).
    Markers indicate batch averages with std.err. bars. Shaded region is 95\% CI.
    }
    \label{fig:si:warm_HV}
\end{figure}

\begin{figure}
    \centering
    \includegraphics[width=0.4\textwidth]{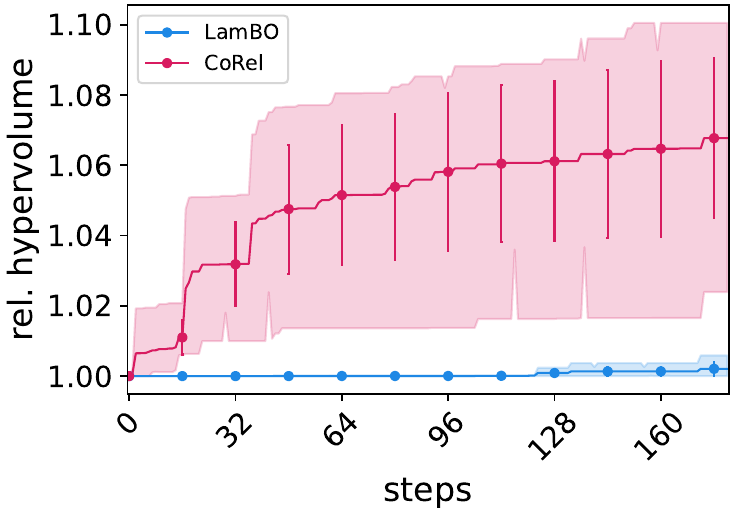}
    \caption{Optimizing \RFP{} discretely in the reference case with 512 starting sequences. We compare \CoRel{} against LamBO in the reference setup, batch-size=16 across three seeds.
    Given that we start with a relatively large starting hypervolume only marginal improvements can be achieved.
    Markers indicate batch averages with std.err. bars. Shaded region is 95\% CI.
    }
    \label{fig:si:ref_HV}
\end{figure}

\begin{figure}
    \centering
    \includegraphics[width=0.4\textwidth]{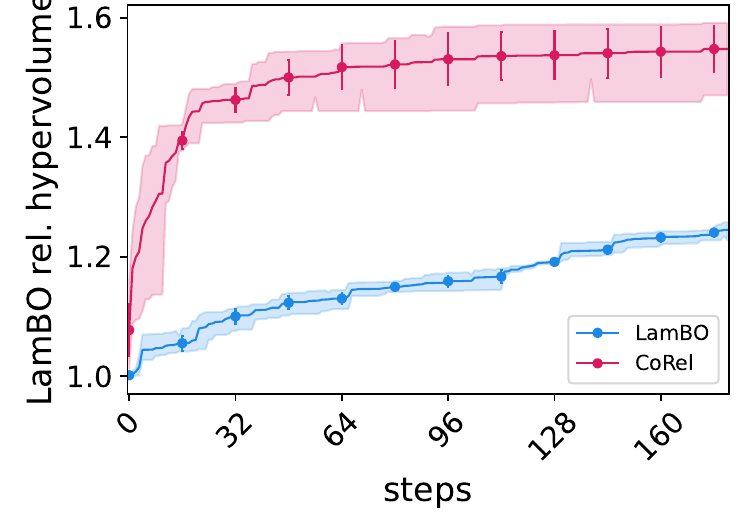}
    \caption{Optimizing \RFP{} discretely in the reference case with 512 starting sequences using the LamBO specific relative hypervolume improvement. 
    This metric is computed with internal reference Pareto front values, which remain fixed across all experiments and display a larger relative improvement over time. 
    Batch-size is 16 across three seeds.
    Markers indicate batch averages with std.err. bars. Shaded region is 95\% CI.
    }
    \label{fig:si:lambo_ref_HV}
\end{figure}

\begin{figure}
    \centering
    \includegraphics[width=0.4\textwidth]{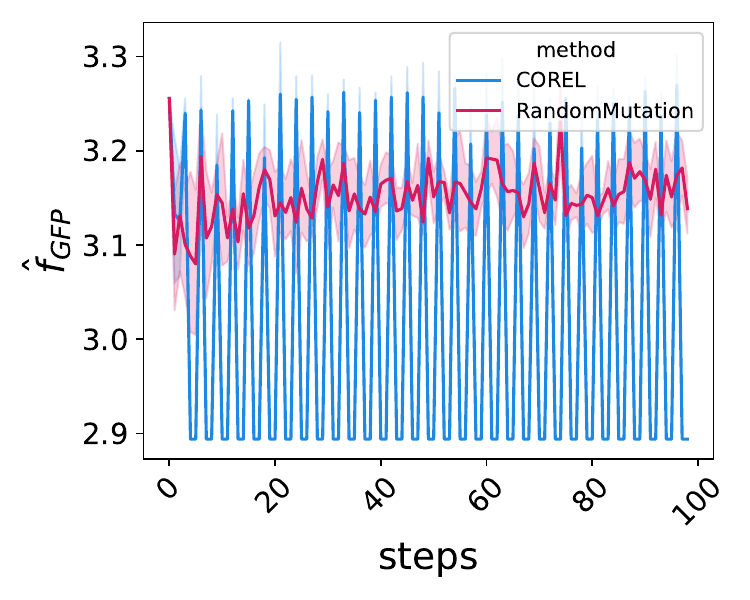}
    \caption{Oracle observations during the course of 100 GFP optimization steps. \CoRel{} jumps between extreme values in contrast to the random exploration.
    }
    \label{fig:si:gfp_obs}
\end{figure}

\end{document}